\documentclass[11pt]{article}

\usepackage[preprint]{acl}

\usepackage{times}
\usepackage{latexsym}

\usepackage[T1]{fontenc}

\usepackage[utf8]{inputenc}

\usepackage{microtype}

\usepackage{inconsolata}

\usepackage{graphicx}

\usepackage{amssymb}       
\usepackage{amsthm}
\newtheorem{lemma}{Lemma}
\newtheorem{theorem}{Theorem}
\usepackage{algorithm}    
\usepackage{algpseudocode}
\usepackage{float}         
\usepackage{booktabs}   
\usepackage{multirow}   
\usepackage{array}        
\usepackage{amsmath}    

\title{Escaping Low-Dimensional Overlap: Multi-Task Model Merging via High-Dimensional Sparse Disentanglement}

\author{
  \textbf{Yihang Zhang}\textsuperscript{1}\thanks{Equal contribution.},
  \textbf{Shengke Sun}\textsuperscript{2}\footnotemark[1],
  \textbf{Junjie Wen}\textsuperscript{3},
  \textbf{Feng Zeng}\textsuperscript{1}\thanks{Corresponding author.} \\
  \textsuperscript{1}Central South University \\
  \textsuperscript{2}Nanjing University of Science and Technology \\
  \textsuperscript{3}Hefei University of Technology \\
  \texttt{zhangyihang1010@gmail.com},
  \texttt{sunshengke@njust.edu.cn},\\
  \texttt{junjie.wen@mail.hfut.edu.cn},
  \texttt{fengzeng@csu.edu.cn}
}

\begin{document}
\maketitle
\begin{abstract}
Model merging provides an efficient way to construct multi-task generalist models without additional training, but its performance often degrades under severe task interference. Task interference in model merging primarily stems from \textit{superposition}, where task-specific features become entangled within the parameter space. This entanglement renders conventional decomposition methods insufficient for effectively isolating useful task directions from interfering components.
In this paper, we propose a sparse-representation-based merging framework that uses Sparse Autoencoders (SAEs) to project task vectors into a high-dimensional sparse feature space, enabling feature-level disentanglement before fusion. To reduce computational overhead, we further introduce a lightweight Group-Ranked Zeroth-Order Optimizer (GR-ZOO) to identify task-critical layers for selective merging.
Experiments on both Qwen2.5-1.5B and Qwen2.5-7B demonstrate that our method consistently outperforms representative baselines, including Task Arithmetic, TIES-Merge, DARE, Fisher-Merge,and several recent training-free merging methods, across mathematical reasoning, code generation, instruction following, and general knowledge tasks. In a highly conflicting four-task setting on Qwen2.5-1.5B, our method further achieves a 2.78\% improvement over the strongest baseline.
\end{abstract}

\begin{figure*}[t]
  \centering
  \includegraphics[width=\textwidth]{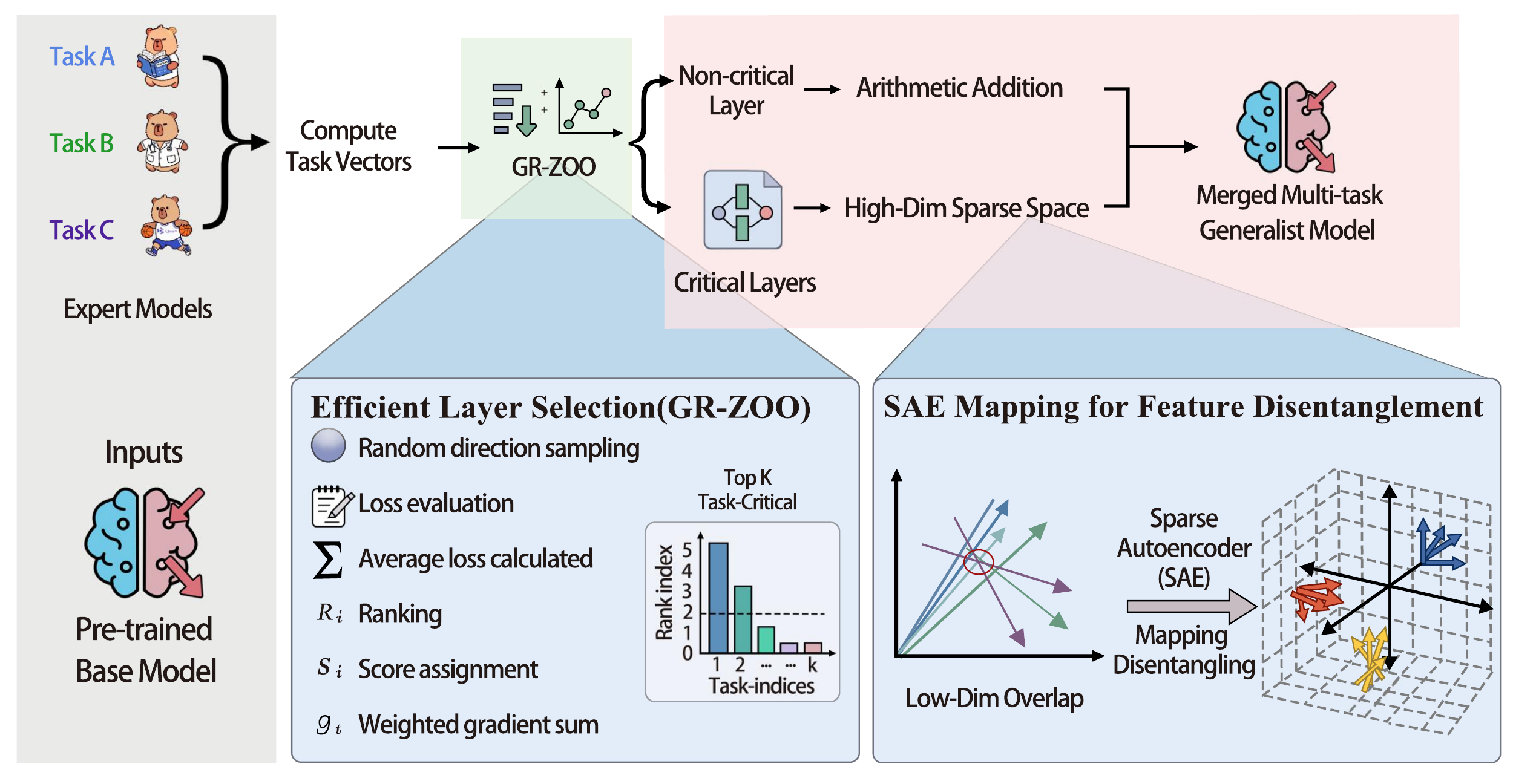}
  \caption{Overall architecture of our proposed High-Dimensional Sparse Disentanglement Merging framework. The pipeline first extracts task vectors from expert models and applies GR-ZOO to identify task-critical layers. Non-critical layers are merged through arithmetic addition, while critical layers are projected into a high-dimensional sparse space via SAE-based feature disentanglement, enabling interference-free fusion of overlapping task features.}
  \label{fig:architecture}
\end{figure*}

\section{Introduction}

As the scale of foundation models continues to grow~\cite{yang2025qwen3,team2023gemini,achiam2023gpt,guo2025deepseek}, fine-tuning and deploying separate models for different downstream tasks has become increasingly prohibitive~\cite{ouyang2022training,Tang_2025_CVPR}. Recently, model merging~\cite{ilharco2022editing,wortsman2022model} has emerged as an efficient way to integrate multiple task-specific models without additional training. It aims to construct a versatile generalist model by directly fusing the weights of multiple expert models that excel in individual tasks~\cite{matena2022merging,ilharco2022editing,yadav2023ties,yu2024language}.

A major challenge in model merging lies in task interference~\cite{jin2023dataless,yadav2023ties,tam2024merging}. Due to different fine-tuning data distributions and optimization objectives of expert models, their parameter updates may encode conflicting task-specific directions. As a result, the merged model often struggles to simultaneously preserve the capabilities of all constituent experts. To mitigate such interference, numerous model merging approaches have been proposed. Parameter-space arithmetic methods, such as TIES-Merge~\cite{yadav2023ties} and DARE~\cite{yu2024language}, attempt to reduce conflicts by pruning, rescaling, or resolving sign conflicts of task vectors before fusion. Parameter decomposition methods, such as TSV~\cite{gargiulo2025task} and Twin-Merging~\cite{lu2024twin}, further seek to separate task-relevant and interfering components through low-rank or structured decompositions. However, these methods still rely on operations within the original parameter space, and thus remain limited when task-specific updates are highly entangled.

In this paper, we attempt to alleviate this limitation through the lens of \textit{superposition}, a phenomenon widely discussed in mechanistic interpretability~\cite{elhage2022toy,cunningham2023sparse}. Mechanistic interpretability studies reveal that in neural networks, multiple functionally distinct features may share non-orthogonal directions in the same representation space. In model merging, this view suggests that task vectors are not always clean combinations of independent task-specific components~\cite{cunningham2023sparse,bricken2023monosemanticity}. Instead, updates from different tasks may partially reuse, compete for, or interfere with overlapping feature directions. Therefore, operations performed only in the original parameter space, such as direct arithmetic or orthogonal decomposition, may still suffer from interference when task-specific features are highly entangled.

Building upon this observation, we propose to reduce task interference by projecting task vectors into a higher-dimensional sparse feature space before fusion, thereby alleviating merging conflicts. We propose \textbf{High-Dimensional Sparse Disentanglement Merging}, a sparse-representation-based framework for model merging. Our method uses Sparse Autoencoders (SAEs) to map layer-wise task vectors into sparse feature codes, where task-specific components can be separated more explicitly~\cite{bricken2023monosemanticity,cunningham2023sparse}. The task vectors are then fused at the feature level and mapped back to the parameter space. Since applying SAE-based disentanglement to all layers can be computationally expensive, inspired by zeroth-order optimization techniques that estimate search directions from function evaluations~\cite{nesterov2017random,malladi2023fine}, we further introduce a lightweight Group-Ranked Zeroth-Order Optimizer (GR-ZOO) to identify task-critical layers. By applying SAE-based merging only to these selected layers, our method reduces computational overhead while preserving the benefits of feature-level disentanglement.

The main contributions of this paper are summarized as follows:
\begin{itemize}
    \item We analyze task interference in model merging from the perspective of \textit{superposition}, suggesting that highly entangled task vectors can be difficult to separate within the original parameter space.
    \item We propose \textbf{High-Dimensional Sparse Disentanglement Merging}, which uses SAEs to project task vectors into a high-dimensional sparse feature space and performs feature-level fusion to reduce interference.
    \item We introduce a Group-Ranked Zeroth-Order Optimizer (GR-ZOO), a forward-pass-only layer selection method that identifies task-critical layers and avoids applying SAE-based disentanglement to every layer.

\end{itemize}

\section{Related Work}
\label{sec:related_work}

\subsection{Multi-Task Model Merging}
\label{subsec:multi_task_merging}

Model merging aims to integrate multiple task-specific expert models into a single generalist model without joint retraining. Recent advancements fall into two primary trajectories: parameter arithmetic and parameter decomposition.

\paragraph{Parameter Arithmetic.}
This line of work operates directly within the original parameter space. Approaches like Model Soups~\cite{wortsman2022model} and Task Arithmetic~\cite{ilharco2022editing} merge weights via linear interpolation or vector addition. To mitigate severe task interference, subsequent methods introduce sparsification and conflict resolution mechanisms, such as trimming and sign election (TIES-Merging~\cite{yadav2023ties}), randomized dropping (DARE~\cite{yu2024language}), and magnitude-aware sampling (DELLA~\cite{deep2024della}). 

\paragraph{Parameter Decomposition.}
To overcome low-dimensional limitations, a complementary trajectory projects task updates into structured subspaces prior to fusion. Methods like KnOTS~\cite{stoica2024knots} and TSV-M~\cite{gargiulo2025task} align or compress updates via Singular Value Decomposition (SVD). Other frameworks introduce spectral truncation (STAR~\cite{lee2025star}), adaptive rank selection (AdaRank~\cite{lee2026adarank}), or explicit subspace partitioning (ISO~\cite{marczak2025no}, WIDEN~\cite{xiong2024widen}, ESM~\cite{li2026model}). 

Although highly scalable, these coordinate-wise heuristic manipulations operate directly in the original parameter space and may struggle when
multiple task updates exhibit substantial interference. Despite providing stronger functional decoupling, these strategies predominantly rely on linear composability assumptions or orthogonal transformations. When functionally distinct capabilities are trapped in non-orthogonal, superposed structures, such rigid linear approximations fail to fully isolate useful task directions from overlapping, conflicting components.

\subsection{Superposition and Sparse Feature Disentanglement}
\label{subsec:superposition_disentanglement}

Mechanistic interpretability establishes that neural networks tend to encode latent features far exceeding their available dimensions into overcomplete, non-orthogonal hidden directions---a phenomenon known as \textit{superposition}~\cite{elhage2022toy, scherlis2022polysemanticity}. This causes \textit{polysemanticity}, where individual parameter coordinates activate across multiple orthogonal contexts~\cite{henighan2023superposition}. Consequently, task vectors are highly superposed mixtures of multi-semantic components. Attempting to resolve interference via low-dimensional coordinate pruning or linear orthogonalization inherently creates a zero-sum trade-off, distorting crucial task capabilities.

To resolve superposition, Sparse Autoencoders (SAEs) have emerged as powerful tools for unsupervised dictionary learning, untangling features into monosemantic components across foundation models~\cite{yun2021transformer, cunningham2023sparse, bricken2023monosemanticity, templeton2024scaling, gao2025scaling}. However,~\cite{cui2026limits} demonstrate that vanilla SAEs suffer from magnitude shrinkage induced by $L_1$ penalties, highlighting the need for rigid activation gating and geometric regularizations. 

While prior work mainly applies Sparse Autoencoders (SAEs) to activation analysis, we extend sparse feature disentanglement to task-vector merging by projecting parameter updates into a high-dimensional sparse latent space before fusion, reducing interference from low-dimensional parameter overlap.

\section{Methodology}

In this section, we present the proposed \textbf{High-Dimensional Sparse Disentanglement Merging} framework. We begin by formulating the model merging problem and theoretically show that sparse decomposition of task vectors can reduce cross-task interference under feature superposition. We then employ an improved \textbf{Sparse Autoencoder} (SAE) to project layer-wise task vectors into a high-dimensional sparse feature space, where task-relevant components can be represented more explicitly. To make sparse disentanglement computationally feasible, we introduce \textbf{Group-Ranked Zeroth-Order Optimizer} (GR-ZOO) to identify task-critical layers, so that the subsequent feature-level processing can be restricted to a small subset of layers. Finally, we design a feature-aware fusion strategy in the sparse latent space and decode the fused representation back to the parameter space, enabling effective multi-task model merging while reducing task interference.

\subsection{Problem Setup}

\noindent\textbf{Model Merging.}
Let $f_{\theta_0}$ denote a pre-trained base model with parameters 
$\theta_0 \in \mathbb{R}^{d}$. Given $N$ downstream tasks 
$\{\mathcal{T}_i\}_{i=1}^{N}$, each task-specific expert model 
$f_{\theta_i}$ is obtained by fine-tuning the same base model on task 
$\mathcal{T}_i$, where $\theta_i \in \mathbb{R}^{d}$ denotes the parameters 
of the $i$-th expert. The goal of model merging is to construct a single 
model $f_{\theta_m}$ that preserves the capabilities of all expert models 
without accessing their original training data or performing additional 
full fine-tuning. Formally, model merging can be written as
\begin{equation}
    \theta_m = \mathcal{A}(\theta_0, \theta_1, \ldots, \theta_N),
\end{equation}
where $\mathcal{A}(\cdot)$ denotes a merging algorithm.

\noindent\textbf{Task Vector.}
Following task-vector-based model merging~\cite{ilharco2022editing,yu2024language}, we define the task vector of expert 
$i$ as the parameter displacement from the base model:
\begin{equation}
    \tau_i = \theta_i - \theta_0 .
\end{equation}
The task vector $\tau_i$ represents the task-specific knowledge acquired by 
fine-tuning on task $\mathcal{T}_i$. A general task-vector merging process can 
then be formulated as
\begin{equation}
    \theta_m = \theta_0 + \mathcal{M}(\tau_1, \tau_2, \ldots, \tau_N),
\end{equation}
where $\mathcal{M}(\cdot)$ is a task-vector fusion operator. A typical example 
is weighted linear merging~\cite{ilharco2022editing}:
\begin{equation}
    \theta_m = \theta_0 + \Delta,
    \quad
    \Delta = \sum_{i=1}^{N} \alpha_i \tau_i ,
\end{equation}
where $\alpha_i$ controls the contribution of expert $i$.

Since modern language models consist of multiple parameterized layers, we can 
also define the layer-wise task vector as
\begin{equation}
    \tau_i^{(\ell)} = \theta_i^{(\ell)} - \theta_0^{(\ell)},
    \quad \ell = 1,2,\ldots,L ,
\end{equation}
where $\theta_i^{(\ell)}$ denotes the parameters of the $\ell$-th layer of 
expert $i$. This layer-wise view is important because task-specific knowledge 
and cross-task conflicts may be unevenly distributed across layers.

\paragraph{Superposition.}
Superposition~\cite{elhage2022toy,scherlis2022polysemanticity} suggests 
that large neural networks may encode more latent features than the number of available independent dimensions, causing multiple features to share the same representational directions. We view task interference through the lens of feature superposition: task vectors may contain sparse combinations of latent capability directions that are not well separated in the original parameter space. As a result, coordinate-wise pruning or orthogonal decomposition may not fully remove cross-task interference without distorting useful capabilities. We formalize this intuition and provide theoretical bounds in Appendix~\ref{app:proofs}. These results motivate projecting
task vectors into a high-dimensional sparse feature space before fusion.

\subsection{High-Dimensional Sparse Disentanglement via Improved SAEs}

To reduce task interference induced by feature superposition in the original parameter space, we project layer-wise task vectors into a high-dimensional sparse latent space. Specifically, for a selected task vector $\tau_i^{(\ell)}\in \mathbb{R}^d$. We train a Sparse Autoencoder~\cite{cunningham2023sparse} (SAE) to encode $\tau_i^{(\ell)}$ into an overcomplete sparse representation and then reconstruct it back to the parameter space:
\begin{equation}
    h = E(\tau_i^{(\ell)})\Rightarrow 
    z = \mathrm{TopK}(h)\Rightarrow  
    \hat{\tau}_i^{(\ell)} = D(z)
\end{equation}
where $E(\cdot)$ and $D(\cdot)$ denote the encoder and decoder, respectively. The latent dimension is set much larger than the input dimension, enabling task vectors to be represented by a small number of activated latent features.

Although standard SAEs are commonly trained with an $L_1$ sparsity penalty, directly applying them to model merging is suboptimal. The $L_1$ penalty tends to shrink latent activation magnitudes, which may distort the scale of task-specific updates~\cite{rajamanoharan2024improving}. Moreover, standard SAE training often suffers from inactive latent units, commonly known as dead neurons, leading to an inefficient use of the high-dimensional latent space. Finally, without proper constraints on the decoder, the model may exploit scale ambiguity between latent activations and decoder weights, resulting in unstable or collapsed feature representations. To address these issues, we introduce the following modifications.

\paragraph{Top-$K$ sparse activation.}
Instead of imposing an $L_1$ penalty on the latent codes, we enforce sparsity using a hard Top-$K$ activation:
\begin{equation}
    z_j =
    \begin{cases}
        h_j, & j \in \mathcal{S}_K(h), \\
        0, & \text{otherwise},
    \end{cases}
\end{equation}
where $\mathcal{S}_K(h)$ denotes the indices of the $K$ largest activations in $h$. This operation preserves the magnitude of salient latent features while ensuring that only a limited number of features are active for each input. Compared with soft sparsity induced by $L_1$ regularization, Top-$K$ sparsity avoids systematic activation shrinkage and encourages different latent units to compete for representing task-relevant directions~\cite{gao2025scaling}.

\paragraph{Residual fitting for inactive features.}
To improve the utilization of the overcomplete latent space, we introduce an auxiliary residual fitting objective for inactive features. Given the reconstruction residual
\begin{equation}
    r = \tau_i^{(\ell)} - \hat{\tau}_i^{(\ell)},
\end{equation}
we encourage rarely activated latent units to explain the residual directions that are not captured by the current active features. Let $\mathcal{D}$ denote the set of inactive or rarely used latent units, and let $z^{res}$ be the auxiliary sparse code obtained by applying Top-$K$ selection only within $\mathcal{D}$. The residual fitting loss is defined as
\begin{equation}
    \mathcal{L}_{res}
    =
    \left\|
        \mathrm{sg}(r) - D(z^{res})
    \right\|_2^2,
\end{equation}
where $\mathrm{sg}(\cdot)$ denotes the stop-gradient operation. This auxiliary loss does not alter the main reconstruction target, but provides additional learning signals for under-utilized features, encouraging them to explore directions not sufficiently represented by the active latent subspace.

\paragraph{Decoder normalization and orthogonality regularization.}
To avoid scale ambiguity between latent activations and decoder weights, we normalize each decoder atom after every update:
\begin{equation}
    w_j \leftarrow \frac{w_j}{\|w_j\|_2},
\end{equation}
where $w_j$ is the $j$-th column of the decoder matrix $W_{dec}$. This constraint ensures that feature competition is primarily determined by directional similarity rather than weight magnitude. In addition, we introduce an orthogonality regularization term on the normalized decoder atoms:
\begin{equation}
    \mathcal{L}_{ortho}
    =
    \left\|
        W_{dec}^{\top} W_{dec} - I
    \right\|_F^2.
\end{equation}
This penalty discourages redundant dictionary atoms and promotes a more diverse latent feature space, which is beneficial for disentangling task-specific components.

The final training objective of the improved SAE is
\begin{align}
    \mathcal{L}_{SAE}
    &=
    \underbrace{\|\omega - D(\mathrm{TopK}(E(\tau_i^{(\ell)})))\|_2^2}_{\mathcal{L}_{rec}}
    \notag\\
    &+\lambda_{res}\mathcal{L}_{res}\notag\\
    &+\lambda_{ortho}\mathcal{L}_{ortho}
\end{align}
where $\lambda_{res}$ and $\lambda_{ortho}$ control the strengths of residual fitting and orthogonality regularization, respectively. After training, each selected layer-wise task vector is encoded into the sparse latent space, merged at the feature level, and decoded back to the original parameter space for model merging.

\subsection{Group-Ranked Zeroth-Order Optimizer (GR-ZOO)}

Although high-dimensional sparse disentanglement can effectively reduce task interference, applying SAE-based mapping to every layer of a large language model is computationally expensive. Moreover, task interference is usually not uniformly distributed across all layers. Therefore, instead of performing sparse disentanglement on all layers, it is more efficient to first identify a small subset of task-critical layers and only apply SAE-based merging to these layers.

A natural way to measure the importance of a layer is to estimate how sensitively the task loss changes when the parameters of that layer are perturbed. However, directly computing gradients for all candidate layers requires backpropagation through the full model, which is memory-intensive and inefficient for large-scale model merging. Zeroth-order optimization~\cite{malladi2023fine,zhang2024revisiting} provides a lightweight alternative by estimating local sensitivity using only forward evaluations. Given a model parameter $\theta$ and a loss function $\mathcal{L}(\theta)$, a standard two-point zeroth-order estimator samples a random direction $u$ and estimates the directional derivative as
\begin{equation}
    g(u)
    =
    \frac{
        \mathcal{L}(\theta + \epsilon u)
        -
        \mathcal{L}(\theta - \epsilon u)
    }{2\epsilon},
\end{equation}
where $\epsilon$ is the perturbation scale. This avoids explicit gradient computation and is thus suitable for layer-wise sensitivity estimation.

Nevertheless, directly using standard zeroth-order estimates for layer selection is unstable in the context of LLMs~\cite{Yu_2025_ICCV,park2024mezoadam,gautam2024variancereduced}. The loss landscape of large models is highly non-smooth and anisotropic, making single-direction estimates extremely noisy. This issue makes naive ZOO prone to high-variance estimates and unstable selection results.

To address these limitations, we propose a \textbf{Group-Ranked Zeroth-Order Optimizer} (GR-ZOO). We partition the model parameters into $G$ groups, where each group corresponds to a layer or a transformer block. Let $\theta^{ref}$ denote a reference expert model, and let $P_g$ be the parameter mask that only perturbs the $g$-th group. For each task $\mathcal{T}_i$, we evaluate the task loss on a small calibration set $\mathcal{B}_i$. For each group $g$, we sample $R$ normalized random directions $\{u_{g,r}\}_{r=1}^{R}$ and compute the symmetric zeroth-order response:
\begin{equation}
\begin{aligned}
    s_{i,g,r}
    &=
    \left|
    \frac{
    \ell_i(\theta^{ref}+\delta_{g,r})
    -
    \ell_i(\theta^{ref}-\delta_{g,r})
    }{2\epsilon}
    \right|, \\
    \ell_i(\theta)
    &\triangleq
    \mathcal{L}_i(\theta; \mathcal{B}_i),
    \qquad
    \delta_{g,r}
    \triangleq
    \epsilon P_g u_{g,r}.
\end{aligned}
\end{equation}
A larger value of $s_{i,g,r}$ indicates that the task loss is more sensitive to perturbations in group $g$, suggesting that this group is more relevant to task $\mathcal{T}_i$.

Instead of directly averaging the raw sensitivity values, GR-ZOO converts them into rank scores. For each task $\mathcal{T}_i$ and perturbation round $r$, we rank all groups according to their zeroth-order responses $\{s_{i,g,r}\}_{g=1}^{G}$. Let $\mathrm{rank}_{i,r}(g)$ denote the rank of group $g$, where a smaller rank indicates stronger sensitivity. The normalized rank score is defined as
\begin{equation}
    q_{i,g,r}
    =
    \frac{G - \mathrm{rank}_{i,r}(g) + 1}{G}.
\end{equation}
The final importance score of group $g$ is obtained by aggregating its rank scores across tasks and perturbation rounds:
\begin{equation}
    S_g
    =
    \frac{1}{NR}
    \sum_{i=1}^{N}
    \sum_{r=1}^{R}
    q_{i,g,r}.
\end{equation}
This rank-based aggregation suppresses the influence of unstable raw loss magnitudes and makes the importance scores more comparable across layers and tasks. Meanwhile, using multiple perturbation directions reduces the variance of zeroth-order estimation and yields a more reliable layer ranking.

After computing $\{S_g\}_{g=1}^{G}$, we select the top-$M$ groups with the highest importance scores:
\begin{equation}
    \mathcal{G}^{*}
    =
    \mathrm{TopM}\left(\{S_g\}_{g=1}^{G}\right).
\end{equation}
SAE-based high-dimensional sparse disentanglement is then applied only to the selected groups $\mathcal{G}^{*}$, while the remaining layers are merged using a lightweight parameter-space strategy. In this way, GR-ZOO serves as an efficient layer selection mechanism that concentrates the computational budget on the most task-sensitive parts of the model, thereby preserving the benefit of sparse feature disentanglement while substantially reducing the overall merging cost.

\subsection{Differentiated Parameter Fusion Strategy}

After projecting the task vectors of critical layers into the high-dimensional sparse feature space, a naive additive fusion rule treats all latent features uniformly. However, some features may correspond to common capabilities shared across multiple tasks. Directly accumulating such overlapping features can lead to norm inflation after decoding, thereby destabilizing the merged parameters. To address this issue, we distinguish shared features from task-specific features and apply different fusion operations to them.

Let $\boldsymbol{\mu}_i$ and $\boldsymbol{\nu}_i$ denote the $i$-th latent feature components of two task vectors in the sparse feature space. We measure their similarity by cosine similarity:
\begin{equation}
    s_i =
    \frac{
        \langle \boldsymbol{\mu}_i, \boldsymbol{\nu}_i \rangle
    }{
        \|\boldsymbol{\mu}_i\|_2 \|\boldsymbol{\nu}_i\|_2 + \epsilon
    },
\end{equation}
where $\epsilon$ is a small constant for numerical stability. Given a similarity threshold $\tau$, we partition the latent features into a shared set and a unique set:
\begin{equation}
    \mathcal{S} = \{ i \mid s_i \geq \tau \},
    \qquad
    \mathcal{U} = \{ i \mid s_i < \tau \}.
\end{equation}
The fused latent feature $\boldsymbol{\omega}^{\mathrm{merged}}_i$ is then computed as
\begin{equation}
    \boldsymbol{\omega}^{\mathrm{merged}}_i =
    \begin{cases}
        \frac{1}{2}\left(\boldsymbol{\mu}_i + \boldsymbol{\nu}_i\right),
        & i \in \mathcal{S}, \\[4pt]
        \boldsymbol{\mu}_i + \boldsymbol{\nu}_i,
        & i \in \mathcal{U}.
    \end{cases}
\end{equation}

For features in the shared set $\mathcal{S}$, high similarity indicates that the corresponding latent directions encode overlapping capabilities across tasks. Therefore, we use mean fusion to preserve these common features while preventing repeated amplification of their magnitudes. In contrast, features in the unique set $\mathcal{U}$ are treated as task-specific components. 

Finally, the fused latent representation is decoded back to the parameter space of the selected critical layers. In this way, the proposed strategy avoids over-amplifying shared capabilities while retaining task-specific features, leading to a more stable and less interfering merged model.

\section{Experiments}

\begin{table*}[t]
\centering
\small
\renewcommand{\arraystretch}{1.2}
\begin{tabular}{l|c|c|c|c}
\hline
\textbf{Methods} 
& \textbf{Math (GSM8k)} 
& \textbf{Code (HumanEval+)} 
& \textbf{IF (IFEval)} 
& \textbf{Average} \\
\hline

Task Arithmetic 
& 83.85 
& 73.80 
& 45.47 
& 67.71 \\

TIES-Merge 
& 83.40 
& 75.00 
& 38.45 
& 65.62 \\

DARE+TIES
& 83.62 
& 72.60 
& 36.78 
& 64.33 \\

Fisher-Merge 
& 84.00
& 75.60
& 41.59 
& 67.06 \\

TSV-Merge 
& 83.32 
& 72.88 
& 45.77
& 67.32 \\

WUDI-Merging 
& 82.79 
& 73.49 
& 44.74 
& 67.01 \\

EMR-Merging 
& 83.70 
& 74.71 
& 43.08 
& 67.16 \\

DELLA 
& 83.40 
& 75.32
& 42.89
& 67.20 \\

\hline

\textbf{Ours}
& 85.22
& 74.40
& 45.84
& 68.49
\\

\hline
\end{tabular}

\caption{
Main multi-task merging results on Qwen2.5-7B.
}
\label{tab:main_results}
\end{table*}

\begin{table*}[t]
\centering
\small
\renewcommand{\arraystretch}{1.2}
\begin{tabular}{l|c|cccc|c|c|c}
\hline
\multirow{2}{*}{\textbf{Methods}}
& \textbf{Math}
& \multicolumn{4}{c|}{\textbf{General}}
& \textbf{Code}
& \textbf{Safe}
& \multirow{2}{*}{\textbf{Average}} \\
\cline{2-8}

& GSM8k
& STEM
& Social Sciences
& Humanities
& Others
& HumanEval
& BeaverTail
& \\

\hline

Task Arithmetic
& 33.74
& 8.91
& 9.39
& 7.48
& 7.00
& 0.00
& 41.60
& 15.45 \\

TIES-Merge
& 51.10
& 19.48
& 15.70
& 14.73
& 11.75
& 32.93
& 61.05
& 29.53 \\

DARE+TIES
& 52.16
& 20.44
& 14.85
& 13.62
& 11.14
& 31.71
& 49.91
& 27.69 \\

Fisher-Merge
& 60.80
& 22.13
& 6.47
& 10.74
& 5.74
& 36.59
& 39.00
& 25.92 \\

TSV-Merge
& 52.00
& 30.10
& 33.30
& 29.70
& 38.80
& 15.00
& 29.00
& 32.24 \\

WUDI-Merging
& 36.00
& 24.30
& 22.60
& 28.90
& 25.90
& 8.00
& 12.00
& 22.50 \\

EMR-Merging
& 33.00
& 26.20
& 23.80
& 25.00
& 23.50
& 8.00
& 3.00
& 20.40 \\

DELLA
& 54.00
& 30.10
& 31.00
& 28.90
& 32.90
& 37.00
& 22.00
& 33.70 \\

\hline

\textbf{Ours}
& 51.55
& 29.42
& 36.53
& 26.65
& 28.35
& 32.32
& 50.55
& 36.48 \\

\hline
\end{tabular}

\caption{
Performance scaling to 4 tasks on Qwen2.5-1.5B.
Average denotes the unweighted arithmetic mean over the seven reported
evaluation metrics: GSM8K, four General capability domains, HumanEval,
and BeaverTail. The same aggregation rule is applied consistently to all
methods. 
}
\label{tab:scaling}
\end{table*}

To comprehensively evaluate the proposed High-Dimensional Sparse Disentanglement Merging framework, we conduct extensive empirical validations across diverse natural language processing tasks and model scales. In this section, we first detail the experimental setup, followed by an in-depth analysis of critical layer identification, main multi-task merging results, performance scalability under increasing task conflict, and comprehensive ablation studies.

\subsection{Experimental Setup}
\paragraph{Models and Task Selection.} We select the Qwen2.5~\cite{qwen2025qwen25technicalreport} series as the foundation models and conduct experiments at two levels:
\begin{itemize}
    \item \textbf{Main Results (7B scale):} We utilize Qwen2.5-7B to verify merging capabilities at a large parameter scale. Three expert models are fine-tuned: Mathematical Reasoning~\cite{gsm8k} (GSM8k), Code Generation~\cite{humanevalplus} (HumanEval+), and Instruction Following~\cite{ifeval} (IFEval).
    \item \textbf{Analysis (1.5B scale):} We utilize Qwen2.5-1.5B for high-conflict scenarios and mechanistic analysis. Four experts are constructed: Math (GSM8k), Code~\cite{humaneval} (HumanEval), General Knowledge~\cite{mmlu} (MMLU subsets), and Safety Alignment~\cite{beavertail} (BeaverTail).
\end{itemize}

\paragraph{Baselines.} 
We compare our framework against representative training-free model
merging methods, including {Task Arithmetic} \citep{ilharco2022editing},
{TIES-Merge} \citep{yadav2023ties}, {DARE} \citep{yu2024language},
{Fisher-Merge}\citep{matena2022merging}, {TSV-Merge} \citep{gargiulo2025task},
{WUDI-Merging} \citep{cheng2025whoever},
{EMR-Merging} \citep{huang2024emr},
and {DELLA-Merging} \citep{deep2024della}.

\begin{figure}[h]
  \centering
  \includegraphics[width=\linewidth]{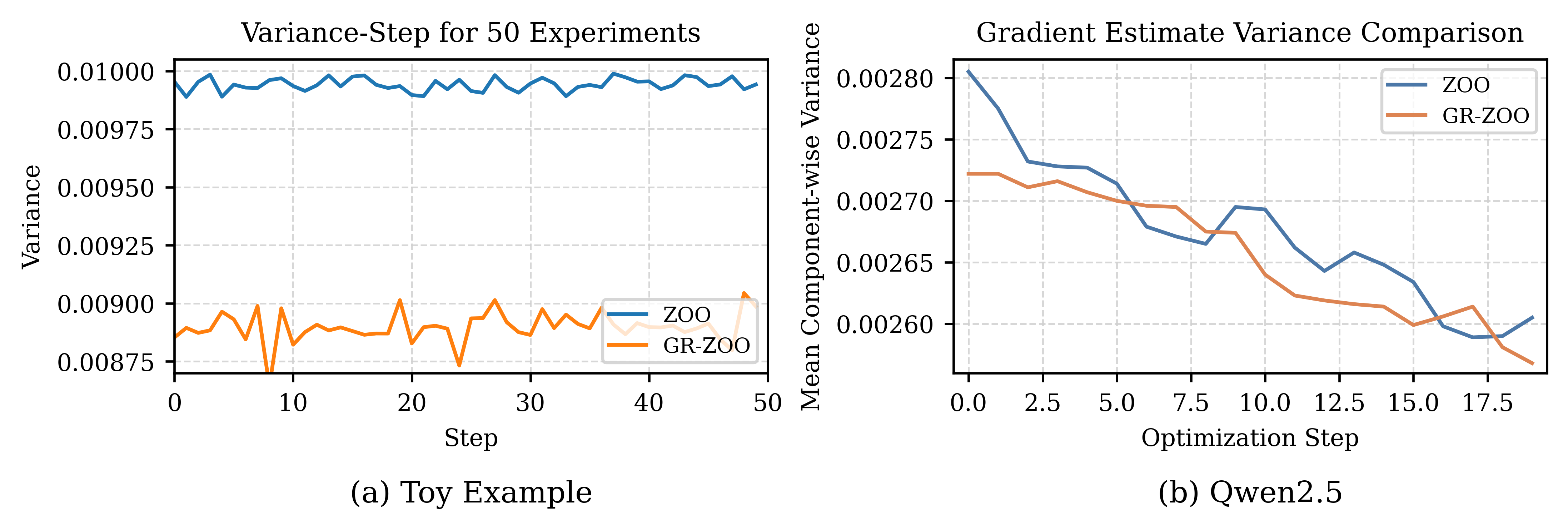}
  \caption{Gradient estimate variance comparison: GR-ZOO effectively smooths the variance compared to standard ZOO.}
  \label{fig:grzoo_variance}
\end{figure}

\subsection{Effectiveness of Critical Layer Identification}
We validate GR-ZOO against full-gradient and random layer selection on Qwen2.5-1.5B. As shown in \textbf{Figure~\ref{fig:grzoo_variance}}, standard ZOO~\cite{malladi2023fine} exhibits high estimation variance in complex LLM loss landscapes, whereas GR-ZOO substantially stabilizes the estimates through group ranking.

As illustrated in \textbf{Figure~\ref{fig:layer_importance}}, the layers selected by GR-ZOO achieve recovery rates of 62.31\% for Math and 41.46\% for Code, closely approaching the full-gradient upper bounds of 65.73\% and 46.34\%, respectively, while significantly outperforming random selection. These results demonstrate that GR-ZOO can effectively identify task-critical layers using only forward-pass evaluations.

\begin{figure}[h]
  \centering
  \includegraphics[width=0.9\linewidth]{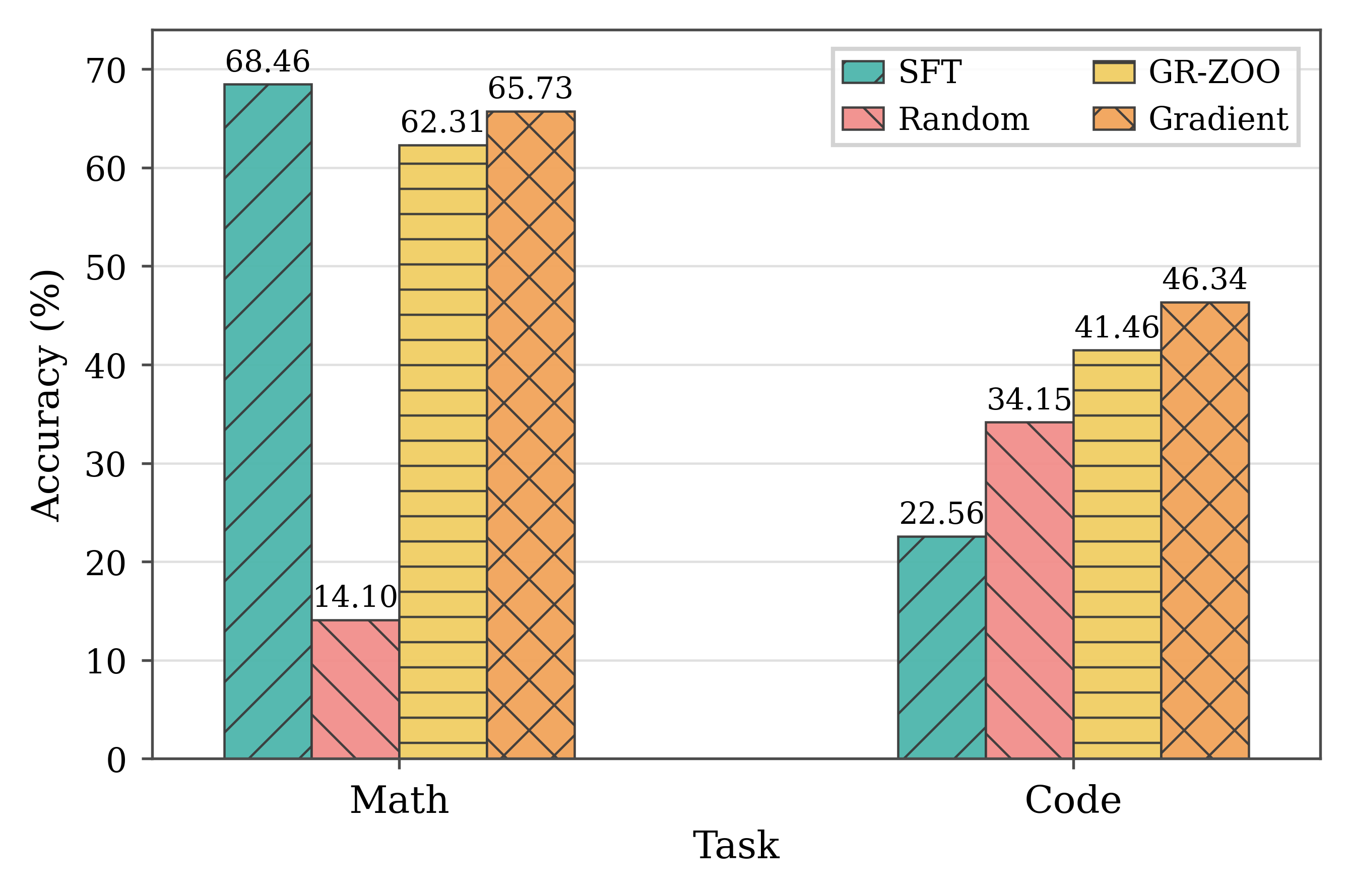}
  \caption{Performance comparison of different layer selection strategies. GR-ZOO closely approximates the theoretical upper bound (Gradient) while significantly outperforming the Random baseline.}
  \label{fig:layer_importance}
\end{figure}

\subsection{Main Multi-Task Merging Results}
We evaluate 3-task merging (Math, Code, IF) on Qwen2.5-7B, as shown in \textbf{Table~\ref{tab:main_results}}. Conventional arithmetic methods suffer from severe performance drops, particularly on sensitive tasks like IFEval (dropping to 38.45 and 36.78). This indicates that low-dimensional masking or dropping destroys shared instruction features. 

Our framework circumvents this by projecting overlapping features into an orthogonal high-dimensional space. It achieves state-of-the-art performance with an \textbf{Average score of 68.49}, notably outperforming the best baseline Task Arithmetic (67.71) and ensuring near-lossless retention of IF and Math capabilities.

\subsection{Performance Degradation under Task Scaling}
The bottleneck of model merging is exposed as the number of tasks increases and conflicts intensify. We extend the Qwen2.5-1.5B setup from 3 tasks to 4 tasks by adding Safety Alignment, with the detailed results presented in \textbf{Table~\ref{tab:scaling}}. 

In the high-conflict 4-task scenario, baselines degrade catastrophically: Task Arithmetic's average score drops to 15.45, and TIES-Merge falls to 29.53. Our framework exhibits superior robustness, maintaining an average score of 36.48. This represents a \textbf{6.95\% improvement over TIES-Merge}, effectively preserving general and math capabilities while maintaining strict safety alignment comparable to the expert model.

\subsection{Ablation Study}

We conduct controlled ablations on Qwen2.5-1.5B to examine the effects of
the projection space, layer-selection strategy, and individual SAE
components. Detailed per-task results and representation-level diagnostics
are provided in Appendix~\ref{app:extended_results}.

\paragraph{Ablation Results.}
Under matched settings, SAE and GR-ZOO achieve the highest average scores
among the projection and layer-selection alternatives, respectively
(Table~\ref{tab:projection_selection_summary}). Our complete SAE design
also improves the average from 39.30 to 40.18 in the 3-task setting and
from 35.24 to 36.48 in the 4-task setting. Detailed results are provided
in Appendix~\ref{app:extended_results}.

\begin{table}[H]
\centering
\small
\setlength{\tabcolsep}{5pt}
\renewcommand{\arraystretch}{1.1}

\begin{tabular}{llc}
\toprule
\textbf{Component}
& \textbf{Variant}
& \textbf{Metric Avg.} \\
\midrule

\multirow{3}{*}{Projection Space}
& PCA
& 30.80 \\

& Random Projection
& 31.01 \\

& SAE
& \textbf{31.56} \\

\midrule

\multirow{4}{*}{Layer Selection}
& First Layers
& 28.27 \\

& Random Selection
& 27.64 \\

& Fisher Selection
& 29.90 \\

& GR-ZOO
& \textbf{31.56} \\

\bottomrule
\end{tabular}

\caption{
Average performance of different projection spaces and layer-selection
strategies. 
}
\label{tab:projection_selection_summary}
\end{table}

\section{Conclusion}

In this paper, we propose the High-Dimensional Sparse Disentanglement Merging framework to address the parameter superposition bottleneck in multi-task merging. By utilizing an improved Top-$K$ Sparse Autoencoder, our method projects overlapping updates into an orthogonal high-dimensional space, substantially eliminating destructive interference. Additionally, we introduce a Group-Ranked Zeroth-Order Optimizer (GR-ZOO) to efficiently identify task-critical layers for targeted disentanglement. Extensive evaluations demonstrate that our approach achieves state-of-the-art performance and exhibits exceptional robustness in highly conflicting scenarios.

\section*{Limitations}
While the proposed framework significantly mitigates task interference, it possesses certain limitations. First, despite the efficiency gains introduced by GR-ZOO, training the Sparse Autoencoder on the critical layers still incurs additional computational overhead prior to merging, compared to purely arithmetic methods (e.g., Task Arithmetic) which operate instantaneously. Second, the framework introduces hyper-parameters, notably the high-dimensional expansion factor and the cosine similarity threshold ($\tau_{sim}$). Although our empirical defaults perform robustly across Qwen2.5 models, optimal thresholds may vary slightly across different model architectures (e.g., LLaMA-3~\cite{grattafiori2024llama} or Mistral~\cite{liu2026ministral}) or significantly different task scales, requiring minor calibration. Future work will explore dynamic, parameter-free thresholding mechanisms.

\section*{Ethics Statement}
This work relies exclusively on publicly available evaluation datasets. We neither collect new human-subject data nor attempt to identify individuals from the datasets. These data are used solely for model training and aggregate evaluation, and we report benchmark-level metrics rather than reproducing potentially harmful examples. All external datasets, benchmarks, models, and judges are properly attributed to their original creators and used in accordance with their intended research purposes. Furthermore, we acknowledge that large language models are prone to hallucinations—generating plausible but factually incorrect or fabricated content—and therefore emphasize the necessity of careful human review and verification of all model outputs prior to deployment or reliance in real-world applications.
\bibliography{custom}

\clearpage
\twocolumn
\appendix
\begin{center}
    \LARGE \textbf{Supplementary Material}
\end{center}
\vspace{1cm}

\section{Model Configuration}
\label{app:model_config}

This section describes the base model and the training details of the task-specific experts discussed in our main experiments. All expert models are initialized from the same foundation model and undergo full-parameter supervised fine-tuning (SFT). To ensure training stability and efficiency, we train all experts using bfloat16 precision, DeepSpeed ZeRO-3, gradient checkpointing, and a cosine learning rate schedule.

\begin{itemize}
    \item \textbf{Base Model (Qwen2.5-7B):} We adopt the pre-trained Qwen2.5-7B model as our foundation base model for all downstream expert tuning and model merging experiments.
    
    \item \textbf{Math Expert:} We fine-tune the base model using approximately 7,160 high-quality mathematical reasoning samples. The data is constructed via rejection sampling (RS): a teacher model generates multiple candidate responses featuring step-by-step reasoning and a final answer. These candidates are then strictly filtered based on formatting and mathematical validity.
    
    \item \textbf{Code Expert:} We train the code expert on 8,646 MBPP-style Python programming tasks. For each instruction (comprising a problem description, function requirements, and test cases), teacher-generated code snippets are extracted and executed against unit and challenge tests. Only test-passing, syntactically correct, and non-duplicate solutions are retained for SFT. 
    
    \item \textbf{Instruction Following (IF) Expert:} We construct 5,473 samples focusing on automatically verifiable rule constraints, such as JSON formatting, numbered lists, word limits, and exact keyword inclusions. Teacher-generated responses are validated through deterministic rule checkers, ensuring that only strictly compliant samples are added to the training set, thereby eliminating manual annotation costs. 
\end{itemize}

\section{Visualization of Sparse Feature Selection}
\label{app:sparse_feature_visualization}

We visualize the element-wise weight differences between our SAE-based merge and Task Arithmetic (TA). Specifically, for selected parameter indices, we compute
$\Delta w = w_{\mathrm{Ours}} - w_{\mathrm{TA}}$,
where $w_{\mathrm{Ours}}$ and $w_{\mathrm{TA}}$ denote the merged weights produced by our method and TA, respectively. The resulting distributions are shown in Figure~\ref{fig:param_diff}.

The near-zero-centered envelopes suggest that our SAE-based merge largely preserves the global structure of TA instead of introducing broad dense shifts. Meanwhile, the discontinuous spike-like patterns are consistent with sparse feature selection, indicating that the SAE mainly calibrates selected parameter directions rather than uniformly rescaling all updates.

\begin{figure}[t!]
  \centering
  \includegraphics[width=0.88\linewidth]{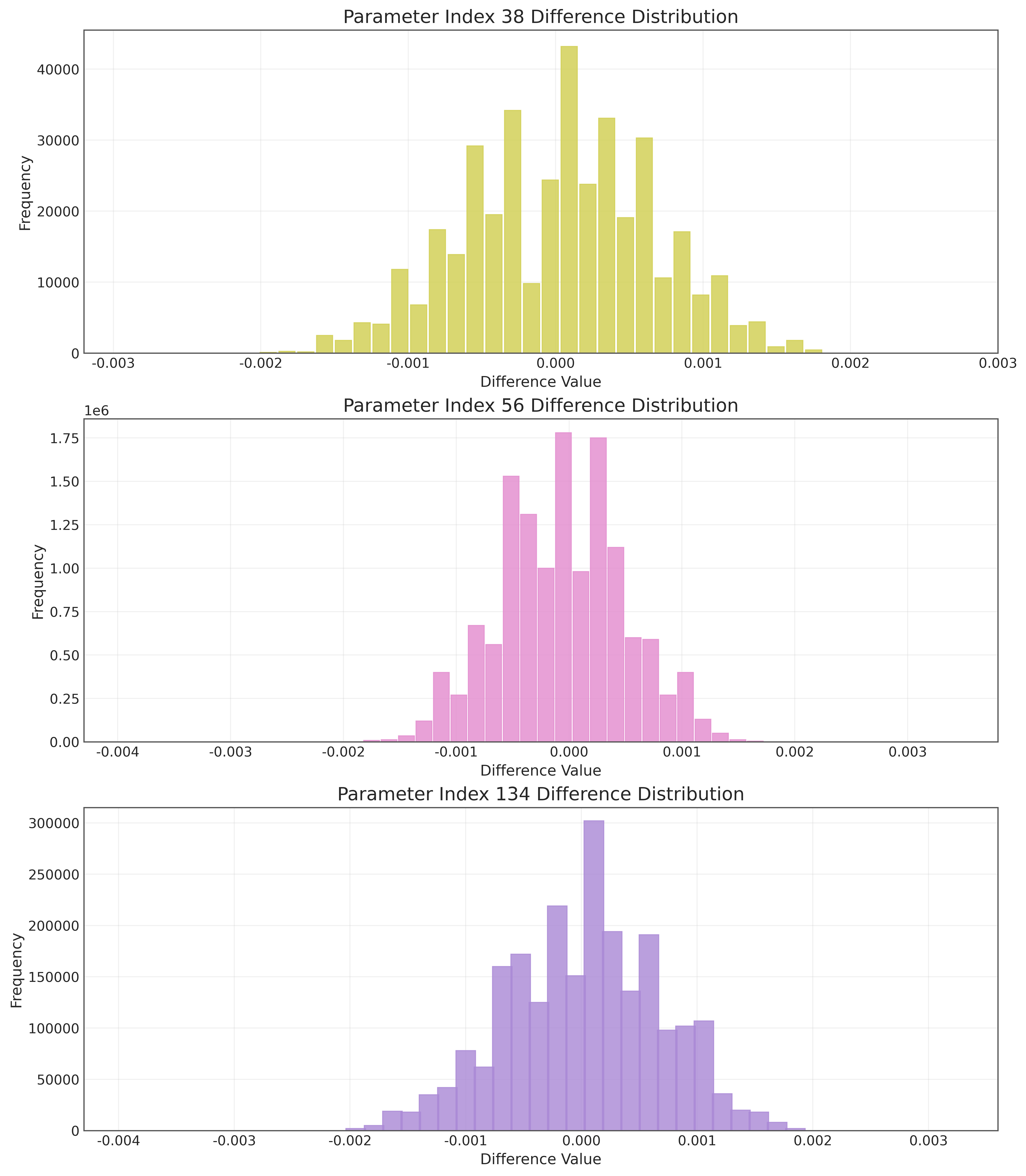}
  \caption{Distribution of element-wise parameter differences
  $\Delta w = w_{\mathrm{Ours}} - w_{\mathrm{TA}}$
  across selected parameter indices. The distributions are centered near zero, while their spike-like structures suggest sparse and selective parameter calibration.}
  \label{fig:param_diff}
\end{figure}

\section{Computational Cost and Training Details}
\label{app:computational_cost}

We profile the computational cost of the GR-ZOO and SAE pipeline and
summarize the training configuration in Table~\ref{tab:computational_cost}.
For SAE training, we report the cost of processing one selected layer.

\begin{table}[t]
\centering
\scriptsize
\setlength{\tabcolsep}{2pt}
\renewcommand{\arraystretch}{0.95}

\begin{tabular}{@{}p{0.50\linewidth}|c@{}}
\hline
\textbf{Item} & \textbf{Setting} \\
\hline
Candidate / selected layers & 197 / 16 \\
Fraction of layers receiving SAE processing & 8.12\% \\
Number of trained SAEs & 16 \\
Cold GR-ZOO time & 467.13s / 7.79min \\
Single-layer SAE training and latent encoding 
& 366.75s / 6.11min \\
Peak GPU memory & 25,153.93 MB \\
Sparse activation & Hard Top-$K$ \\
Default Top-$K$ & $K=32$ \\
Post-Top-$K$ active ratio & 8.33\% \\
Similarity threshold & $\tau=10^{-4}$ \\
Objective & $L_{\mathrm{SAE}}$ \\
Optimizer & AdamW \\
Learning rate & $10^{-4}$ \\
Batch size & 512 \\
Training epochs & 10 \\
Latent expansion factor & $4\times$ \\
\hline
\end{tabular}

\caption{
Computational cost and training configuration of GR-ZOO and SAE.
SAE cost is measured for one selected layer.
}
\label{tab:computational_cost}
\end{table}

All experiments are conducted on NVIDIA A800 GPUs with 80GB memory.
The expert models are trained with bfloat16 precision, DeepSpeed ZeRO-3,
gradient checkpointing, and a cosine learning rate schedule.

GR-ZOO introduces only an offline layer-selection cost. After selecting
task-critical layers, SAE processing is applied only to these layers.
The decoded merged model preserves the original architecture and introduces
no additional inference overhead.

\section{Disentanglement Analysis}
\label{app:disentanglement}

To further verify whether SAE provides effective feature disentanglement,
we compare the activation patterns of raw task vectors and SAE latent
representations. We measure feature overlap using pairwise active overlap
and active Jaccard similarity, where lower values indicate less shared
activation. We also report the ratio of task-specific and shared features
to characterize the allocation of latent features.

As shown in Table~\ref{tab:disentanglement}, SAE significantly reduces
activation overlap while increasing the proportion of task-specific
features. These results suggest that the proposed sparse latent
representation provides a more separable feature space compared with
directly merging task vectors in the original parameter space.
\begin{table}[t]
\centering
\scriptsize
\setlength{\tabcolsep}{3pt}
\renewcommand{\arraystretch}{1.0}

\begin{tabular}{@{}p{0.38\linewidth}|c|c@{}}
\hline
\textbf{Metric} 
& \textbf{Raw}
& \textbf{SAE}
\\
\hline

Pairwise active overlap $\downarrow$
& 0.3368
& \textbf{0.0983}
\\

Active Jaccard $\downarrow$
& 0.2027
& \textbf{0.0518}
\\

Task-specific active ratio $\uparrow$
& 29.40\%
& \textbf{73.84\%}
\\

Shared active ratio $\downarrow$
& 70.60\%
& \textbf{26.16\%}
\\

Shared feature ratio $\downarrow$
& 50.96\%
& \textbf{14.52\%}
\\

\hline
\end{tabular}

\caption{
Disentanglement analysis between raw task vectors and SAE latent
representations. SAE reduces activation overlap and increases
task-specific feature allocation.
}
\label{tab:disentanglement}
\end{table}

\section{Hyperparameter Sensitivity Analysis}
\label{app:sensitivity}

We investigate the sensitivity of our method to two important
hyperparameters: the Top-$K$ sparsity level and the similarity threshold
used for feature selection. All experiments are conducted under the same
evaluation protocol, and we report the mean performance with standard
deviation over multiple runs.

\subsection{Sensitivity to Top-$K$}

The Top-$K$ activation controls the sparsity level of the SAE latent
representation. We evaluate different values of $K$ while keeping other
training configurations unchanged. As shown in Table~\ref{tab:topk_sensitivity},
the performance remains stable across a broad range of sparsity levels.
Although individual tasks prefer slightly different values of $K$, the
overall variation is limited, demonstrating that our method does not
rely on a narrow sparsity setting.

\begin{table}[H]
\centering
\scriptsize
\setlength{\tabcolsep}{2.5pt}
\renewcommand{\arraystretch}{1.05}

\begin{tabular}{@{}ccccc@{}}
\toprule
$K$ & GSM8K & HumanEval & MMLU & Avg. \\
\midrule
8
& $42.05\pm0.42$
& \textbf{$37.60\pm0.70$}
& $12.08\pm0.53$
& $30.58\pm0.25$ \\

16
& $42.03\pm0.85$
& $36.18\pm0.93$
& \textbf{$13.68\pm1.64$}
& $30.63\pm0.40$ \\

24
& $41.82\pm1.03$
& $37.20\pm2.44$
& $12.63\pm0.11$
& $30.55\pm1.09$ \\

32
& \textbf{$42.36\pm0.42$}
& $36.18\pm1.27$
& $13.31\pm1.88$
& $30.61\pm0.99$ \\

64
& $41.93\pm1.18$
& $37.40\pm1.27$
& $12.75\pm1.76$
& \textbf{$30.69\pm0.75$} \\
\bottomrule
\end{tabular}

\caption{
Sensitivity to the Top-$K$ sparsity level with the similarity threshold
fixed at $\tau=10^{-4}$. IFEval-S denotes the strict IFEval score.
}
\label{tab:topk_sensitivity}
\end{table}

\subsection{Sensitivity to Similarity Threshold}

The similarity threshold $\tau$ determines whether latent features are
considered sufficiently similar during feature selection. We evaluate
different threshold values while fixing $K=32$. Table~\ref{tab:threshold_sensitivity}
shows that the proposed method is robust to different threshold choices.
The default setting $\tau=10^{-4}$ achieves the best overall performance,
providing an effective balance between feature sharing and task-specific
feature preservation.

\begin{table}[H]
\centering
\scriptsize
\setlength{\tabcolsep}{2.5pt}
\renewcommand{\arraystretch}{1.05}

\begin{tabular}{@{}ccccc@{}}
\toprule
$\tau$ & GSM8K & HumanEval & MMLU & Avg. \\
\midrule
$0$
& $42.13\pm0.23$
& $35.57\pm1.41$
& $13.19\pm1.68$
& $30.29\pm0.61$ \\

$10^{-5}$
& $42.15\pm0.23$
& $36.38\pm1.41$
& $12.14\pm0.70$
& $30.22\pm0.57$ \\

$10^{-4}$
& \textbf{$42.36\pm0.42$}
& $36.18\pm1.27$
& \textbf{$13.31\pm1.88$}
& \textbf{$30.61\pm0.99$} \\

$10^{-3}$
& $42.05\pm0.29$
& \textbf{$36.59\pm1.61$}
& $13.06\pm1.48$
& $30.57\pm0.92$ \\
\bottomrule
\end{tabular}

\caption{
Sensitivity to the similarity threshold $\tau$ with the sparsity level
fixed at $K=32$. IFEval-S denotes the strict IFEval score.
}
\label{tab:threshold_sensitivity}
\end{table}

\section{Extended Experimental Results}
\label{app:extended_results}

This section provides additional experimental results that complement the
aggregated analyses in the main text. We first report detailed per-domain
results for the 3-task and 4-task merging settings on Qwen2.5-1.5B.
We then compare alternative projection spaces and layer-selection strategies,
followed by representation-level diagnostics of the individual components
in our SAE design.

\subsection{Detailed Downstream Results}
\label{app:detailed_downstream_results}

We compare a standard $L_1$-regularized SAE with our proposed SAE design,
which combines Top-$K$ activation, decoder normalization, residual fitting,
and orthogonality regularization. The 3-task setting merges the Math,
General, and Code experts, while the 4-task setting additionally incorporates
the Safety expert.

The ``Metric Avg.'' column reports the unweighted mean over all evaluation
metrics displayed in each table. Since the General capability is evaluated
using four domain-specific metrics, the metric average assigns greater
weight to General than to each of the other task categories. We therefore
use the detailed per-domain results to analyze the capability trade-offs
between different variants.

\paragraph{Three-Task Merging.}

Table~\ref{tab:ablation_3task} reports the detailed results for merging the
Math, General, and Code experts. Compared with the standard $L_1$ SAE, our
Top-$K$ SAE with orthogonality regularization improves the metric average
from 39.30 to 40.18. The proposed variant improves performance on GSM8K,
Humanities, and Others, while maintaining comparable HumanEval performance.
These results indicate that the proposed SAE design provides a better
overall balance across the three merged capabilities.

\begin{table*}[t]
\centering
\small
\renewcommand{\arraystretch}{1.2}
\begin{tabular}{l|c|cccc|c|c}
\hline
\multirow{2}{*}{\textbf{Method}}
& \textbf{Math}
& \multicolumn{4}{c|}{\textbf{General}}
& \textbf{Code}
& \multirow{2}{*}{\textbf{Metric Avg.}} \\
\cline{2-7}
& GSM8K
& STEM
& Social Sciences
& Humanities
& Others
& HumanEval
& \\
\hline
Task Arithmetic
& 53.14
& 33.63
& \textbf{46.99}
& 33.79
& \textbf{42.91}
& 14.63
& 37.52 \\

TIES-Merge
& 51.93
& 30.35
& 29.90
& 27.97
& 26.77
& 31.10
& 33.00 \\

DARE+TIES
& \textbf{56.02}
& 29.26
& 28.34
& 25.80
& 25.29
& 32.93
& 32.94 \\

Fisher-Merge
& 48.45
& 21.40
& 10.63
& 13.82
& 8.76
& \textbf{37.20}
& 23.38 \\
\hline
$L_1$ SAE
& 54.89
& \textbf{35.02}
& 43.68
& 31.99
& 36.67
& \textbf{33.54}
& 39.30 \\

\textbf{Ours}
& 55.27
& 33.66
& 46.15
& \textbf{34.47}
& 38.59
& 32.93
& \textbf{40.18} \\
\hline
\end{tabular}
\caption{
Detailed results for 3-task merging on Qwen2.5-1.5B.
Compared with the standard $L_1$-regularized SAE, our complete SAE design
achieves a higher overall metric average while maintaining a better balance
across Math, General, and Code capabilities.
Metric Avg.\ denotes the unweighted mean over the six displayed evaluation
metrics.
}
\label{tab:ablation_3task}
\end{table*}

\paragraph{Four-Task Merging.}

Table~\ref{tab:ablation_4task} reports the corresponding results after
incorporating the Safety expert. This setting requires the merged model to
simultaneously preserve mathematical reasoning, general knowledge, code
generation, and safety alignment, resulting in stronger cross-task
interference.

Our proposed SAE design improves the metric average from 35.24 to 36.48.
The main gains occur across the four General domains: STEM increases from
27.44 to 29.42, Social Sciences from 33.44 to 36.53, Humanities from
22.51 to 26.65, and Others from 26.22 to 28.35. These results suggest that
Top-$K$ activation and orthogonality regularization improve the retention
of general capabilities when more heterogeneous experts are merged.

The proposed variant does not improve every individual metric. HumanEval
decreases from 32.93 to 32.32, and BeaverTail decreases from 52.97 to
50.55. We therefore interpret the improvement as better overall
cross-task balance rather than uniform superiority on every evaluation
metric.

\begin{table*}[t]
\centering
\small
\renewcommand{\arraystretch}{1.2}
\begin{tabular}{l|c|cccc|c|c|c}
\hline
\multirow{2}{*}{\textbf{Method}}
& \textbf{Math}
& \multicolumn{4}{c|}{\textbf{General}}
& \textbf{Code}
& \textbf{Safety}
& \multirow{2}{*}{\textbf{Metric Avg.}} \\
\cline{2-8}
& GSM8K
& STEM
& Social Sciences
& Humanities
& Others
& HumanEval
& BeaverTail
& \\
\hline
Task Arithmetic
& 33.74
& 8.91
& 9.39
& 7.48
& 7.00
& 0.00
& 41.60
& 15.45 \\

TIES-Merge
& 51.10
& 19.48
& 15.70
& 14.73
& 11.75
& 32.93
& \textbf{61.05}
& 29.53 \\

DARE+TIES
& 52.16
& 20.44
& 14.85
& 13.62
& 11.14
& 31.71
& 49.91
& 27.69 \\

Fisher-Merge
& \textbf{60.80}
& 22.13
& 6.47
& 10.74
& 5.74
& \textbf{36.59}
& 39.00
& 25.92 \\
\hline
$L_1$ SAE
& 51.18
& 27.44
& 33.44
& 22.51
& 26.22
& \textbf{32.93}
& \textbf{52.97}
& 35.24 \\

\textbf{Ours}
& 51.55
& \textbf{29.42}
& \textbf{36.53}
& \textbf{26.65}
& \textbf{28.35}
& 32.32
& 50.55
& \textbf{36.48} \\
\hline
\end{tabular}
\caption{
Detailed results for 4-task merging on Qwen2.5-1.5B.
Compared with the standard $L_1$-regularized SAE, our complete SAE design
improves the overall metric average and better preserves General capabilities
under increased cross-task interference.
Metric Avg.\ denotes the unweighted mean over the seven displayed evaluation
metrics.
}
\label{tab:ablation_4task}
\end{table*}

\subsection{Detailed Projection-Space and Layer-Selection Results}
\label{app:projection_layer_ablation}

Table~\ref{tab:projection_selection_full} reports the detailed per-task
results corresponding to the average scores summarized in
Table~\ref{tab:projection_selection_summary} of the main text.

For the projection-space comparison, we replace the SAE with PCA or random
projection while keeping the selected layers and downstream fusion procedure
fixed. For the layer-selection comparison, all strategies select the same
number of layers and use the same SAE and feature-level fusion configurations.

\begin{table*}[t]
\centering
\small
\renewcommand{\arraystretch}{1.2}

\begin{tabular}{llcccc}
\toprule
\textbf{Component}
& \textbf{Variant}
& \textbf{GSM8K}
& \textbf{HumanEval}
& \textbf{IFEval}
& \textbf{Metric Avg.} \\
\midrule

\multirow{3}{*}{Projection Space}
& PCA
& 41.09
& 35.98
& \textbf{15.34}
& 30.80 \\

& Random Projection
& 40.94
& \textbf{38.41}
& 13.68
& 31.01 \\

& SAE
& \textbf{42.76}
& 36.59
& \textbf{15.34}
& \textbf{31.56} \\

\midrule

\multirow{4}{*}{Layer Selection}
& First Layers
& 42.84
& 28.66
& 13.31
& 28.27 \\

& Random Selection
& 40.94
& 28.66
& 13.31
& 27.64 \\

& Fisher Selection
& \textbf{43.06}
& 31.10
& \textbf{15.53}
& 29.90 \\

& GR-ZOO
& 42.76
& \textbf{36.59}
& 15.34
& \textbf{31.56} \\

\bottomrule
\end{tabular}

\caption{
Detailed per-task results for the projection-space and layer-selection
ablations. For the projection-space comparison, all methods use the same
selected layers and fusion procedure. For the layer-selection comparison,
all strategies select the same number of layers and use identical SAE and
fusion configurations. Metric Avg.\ denotes the unweighted mean over GSM8K,
HumanEval, and IFEval.
}
\label{tab:projection_selection_full}
\end{table*}

\paragraph{Projection Space.}
The SAE achieves the highest metric average of 31.56, compared with 30.80
for PCA and 31.01 for random projection. Although random projection obtains
a higher HumanEval score and PCA matches the SAE on IFEval, the SAE provides
the best overall balance across mathematical reasoning, code generation,
and instruction following. This result indicates that the improvement
cannot be explained solely by projecting task vectors into a different or
higher-dimensional space.

\paragraph{Layer Selection.}
GR-ZOO achieves the highest metric average among the evaluated
layer-selection strategies. Fisher selection obtains slightly higher GSM8K
and IFEval scores, but substantially underperforms GR-ZOO on HumanEval.
Selecting the first layers or random layers also produces considerably lower
average scores. These results suggest that selecting layers according to
multi-task behavioral sensitivity provides a more balanced preservation of
the merged capabilities.

\subsection{Representation-Level Diagnostics of SAE Components}
\label{app:sae_component_diagnostics}

We further isolate the roles of Top-$K$ activation, residual fitting,
decoder normalization, and orthogonality regularization. Since these
components directly affect the internal representation learned by the SAE,
we evaluate them using reconstruction quality, activation sparsity,
dictionary utilization, decoder-scale variation, and atom coherence.

The fraction of variance unexplained (FVU) measures reconstruction error.
The dead-feature rate measures the proportion of latent features that are
rarely activated. The post-Top-$K$ active ratio measures the sparsity of the
latent representation after feature selection, while the pre-Top-$K$ active
ratio measures the density of candidate activations before truncation.
Norm standard deviation measures scale variation among decoder atoms, and
atom coherence measures the similarity between different decoder
directions.

\begin{table*}[t]
\centering
\small
\resizebox{\textwidth}{!}{
\begin{tabular}{lcccccc}
\hline
\textbf{Variant}
& \textbf{FVU $\downarrow$}
& \textbf{Dead Features (\%) $\downarrow$}
& \textbf{Post-Top-$K$ Active (\%) $\downarrow$}
& \textbf{Pre-Top-$K$ Active (\%)}
& \textbf{Norm Std. $\downarrow$}
& \textbf{Atom Coherence $\downarrow$} \\
\hline
Full SAE
& 0.1528
& 0.0488
& 8.33
& 20.22
& 0.0000
& 0.0176 \\

w/o Top-$K$
& 0.0212
& 0.0000
& 35.76
& 35.76
& 0.0000
& 0.0176 \\

w/o Residual Fitting
& 0.1539
& 0.0000
& 8.33
& 26.83
& 0.0000
& 0.0176 \\

w/o Decoder Normalization
& 0.1565
& 0.2441
& 8.33
& 18.25
& 0.0101
& 0.0176 \\

w/o Orthogonality
& 0.0110
& 1.5299
& 8.33
& 14.39
& 0.0000
& 0.0224 \\

w/o Normalization and Orthogonality
& 0.0107
& 5.1921
& 8.33
& 13.23
& 0.0094
& 0.0224 \\
\hline
\end{tabular}
}
\caption{
Representation-level diagnostics of the individual SAE components.
Lower reconstruction error does not necessarily indicate better feature
separation, since a dense or redundant representation may obtain lower FVU
while exhibiting higher activation density, atom coherence, or dead-feature
rates.
}
\label{tab:sae_component_diagnostics}
\end{table*}

\paragraph{Effect of Top-$K$ Activation.}

Removing Top-$K$ substantially reduces FVU because the SAE is allowed to
use a much denser latent representation. However, the active-feature ratio
increases from 8.33\% to 35.76\%, weakening the sparse feature separation
required by the subsequent feature-level fusion procedure. This observation
shows that reconstruction error alone is insufficient for evaluating the
quality of the learned latent representation.

\paragraph{Effect of Decoder Normalization and Orthogonality.}

Removing decoder normalization introduces scale variation among decoder
atoms, as reflected by the increase in norm standard deviation from 0.0000
to 0.0101. It also increases FVU and the dead-feature rate.

Removing orthogonality regularization reduces FVU but increases atom
coherence from 0.0176 to 0.0224 and raises the dead-feature rate from
0.0488\% to 1.5299\%. Removing both decoder normalization and
orthogonality further increases the dead-feature rate to 5.1921\%.
These results suggest that a lower reconstruction error may be achieved by
learning a more redundant dictionary, which is less suitable for separating
overlapping task features.

\paragraph{Effect of Residual Fitting.}

Residual fitting has a comparatively modest effect on the reported
representation-level metrics. Removing it leaves the post-Top-$K$ active
ratio unchanged at 8.33\%, but increases the pre-Top-$K$ active ratio from
20.22\% to 26.83\%. We therefore interpret residual fitting as a secondary
mechanism for stabilizing reconstruction and dictionary utilization rather
than as the primary source of the downstream improvement.

\section{Disclosure of LLM usage}
We have used SoTA LLMs extensively to brainstorm ideas to prove mathematical statements presented in the paper. Specifically, we setup research directions, provide problem setup and intuitions, proposes statements for LLM to analyze and prove, points out key issues in the generated proofs, adjust the statements accordingly and iterate. We also have done extensive experiments to verify the resulting statements. Many proofs proposed by LLMs are incorrect in subtle ways and requires substantial editing and correction. We have carefully revised all the proofs presented in the work, and take full accountability for their correctness.

\onecolumn
\section{Detailed Proofs}
\label{app:proofs}

For clarity, we consider a single layer and omit the layer index $\ell$ when there is no ambiguity. Let $h(x;\theta)\in\mathbb{R}^{m}$ denote the activation of this 
layer under input $x$. For expert $i$, the activation shift induced by its task 
vector $\tau_i$ is defined as
\begin{equation}
    \delta h_i(x)
    =
    h(x;\theta_i)-h(x;\theta_0).
\end{equation}
Around the base model $\theta_0$, this activation shift can be approximated by
a first-order linearization:
\begin{equation}
    \delta h_i(x)
    =
    J(x)\tau_i + \rho_i(x),
    \label{eq:linearized_activation}
\end{equation}
where $J(x)=\nabla_{\theta}h(x;\theta_0)$ is the Jacobian of the activation 
with respect to the layer parameters, and $\rho_i(x)$ denotes the higher-order 
linearization error. Define the linear operator
\begin{equation}
    \Phi(u)(x)=J(x)u ,
\end{equation}
which maps a weight-space perturbation $u$ to its induced activation 
perturbation. This operator induces a behavior-aware semi-inner product in the 
weight space:
\begin{align}
    \langle u,v\rangle_G
    &=
    \langle \Phi(u),\Phi(v)\rangle_{\mathcal{H}} \notag\\
    &=
    \mathbb{E}_{x\sim\mathcal{D}}
    \left[
        u^{\top}J(x)^{\top}J(x)v
    \right]
    \label{eq:behavior_inner_product}
\end{align}
where $\mathcal{H}=L_2(\mathcal{D};\mathbb{R}^{m})$ denotes the activation 
function space under input distribution $\mathcal{D}$, and 
$G=\mathbb{E}_{x\sim\mathcal{D}}[J(x)^{\top}J(x)]$.

Following the superposition view, we assume that the activation shifts can be 
represented by an overcomplete latent feature dictionary 
$A=\{a_1,\ldots,a_K\}\subset\mathcal{H}$:
\begin{align}
    \delta h_i
    =
    \sum_{k=1}^{K} c_{ik} a_k + e_i \\
    \qquad
    \|c_i\|_0 \ll K,
    \qquad
    \|e_i\|_{\mathcal{H}}\le \varepsilon_{\mathrm{sup}}\notag
    \label{eq:activation_superposition}
\end{align}
where $c_i\in\mathbb{R}^{K}$ is a sparse coefficient vector and $e_i$ is the 
approximation error. Since the dictionary is overcomplete, the feature 
directions $\{a_k\}_{k=1}^{K}$ cannot, in general, be mutually orthogonal. We 
measure their activation-level coherence by
\begin{equation}
    \mu_A
    =
    \max_{p\neq q}
    \frac{
    |\langle a_p,a_q\rangle_{\mathcal{H}}|
    }{
    \|a_p\|_{\mathcal{H}}\|a_q\|_{\mathcal{H}}
    }.
\end{equation}
When $\mu_A>0$, different latent features interact through non-zero cross 
terms in activation space. Through the linearized map $\Phi$, such activation 
superposition is reflected in the weight space: task vectors become sparse 
combinations of latent capability directions that are not behaviorally 
orthogonal under $\langle\cdot,\cdot\rangle_G$. Consequently, directly merging 
task vectors in the original parameter space may preserve not only useful 
task-specific components, but also the cross-feature interactions that cause 
interference.

\begin{lemma}[Activation superposition induces weight superposition]
\label{lem:activation_to_weight_superposition}
Suppose that the activation shift induced by task vector $\tau_i$ satisfies the
linearized relation $\delta h_i \approx \Phi(\tau_i)$, and that the activation
features $\{a_k\}_{k=1}^{K}$ are approximately reachable by weight-space
directions $\{v_k\}_{k=1}^{K}$. If
\begin{equation}
    \delta h_i \approx \sum_{k=1}^{K} c_{ik} a_k,
    \qquad \|c_i\|_0 \ll K,
\end{equation}
then the corresponding task vector admits an observable weight-space
superposition form
\begin{equation}
    \tau_i
    =
    \sum_{k=1}^{K} c_{ik} v_k
    +
    \xi_i
    +
    n_i,
    \qquad n_i\in\ker(\Phi),
\end{equation}
where $\xi_i$ is a bounded approximation error. Moreover, the coherence among
activation features is preserved up to approximation error under the
behavior-aware metric $\langle\cdot,\cdot\rangle_G$.
\end{lemma}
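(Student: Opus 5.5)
The natural route is to decompose the weight space orthogonally with respect to $\Phi$ and then do a triangle-inequality bookkeeping. First fix the hypotheses quantitatively. Approximate reachability means
\[
\|\Phi(v_k)-a_k\|_{\mathcal{H}}\le\eta_k .
\]
The linearization means
\[
\|\delta h_i-\Phi(\tau_i)\|_{\mathcal{H}}\le\|\rho_i\|_{\mathcal{H}},
\]
by \eqref{eq:linearized_activation}. The sparse code is the one in \eqref{eq:activation_superposition}, with $\|e_i\|_{\mathcal{H}}\le\varepsilon_{\mathrm{sup}}$.

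Since $\Phi$ is linear in $u$, the weight space splits as $\ker(\Phi)\oplus\ker(\Phi)^{\perp}$, where the complement is taken in the Euclidean inner product; equivalently, $\ker(\Phi)=\ker(G)$. Every $\tau_i$ decomposes uniquely, and we set $n_i$ to be the projection of $\tau_i-\sum_k c_{ik}v_k$ onto $\ker(\Phi)$. We then define
\[
\xi_i:=\tau_i-\sum_k c_{ik}v_k-n_i ,
\]
so the displayed identity holds by construction, with $n_i\in\ker(\Phi)$ as required.

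It remains to bound $\xi_i$. Apply $\Phi$ and use $\Phi(n_i)=0$:
\[
\Phi(\xi_i)=\Phi(\tau_i)-\sum_k c_{ik}\Phi(v_k)
=(\Phi(\tau_i)-\delta h_i)+e_i+\sum_k c_{ik}(a_k-\Phi(v_k)).
\]
Hence
\[
\|\xi_i\|_G=\|\Phi(\xi_i)\|_{\mathcal{H}}\le\|\rho_i\|_{\mathcal{H}}+\varepsilon_{\mathrm{sup}}+\|c_i\|_1\max_k\eta_k .
\]
The sparsity $\|c_i\|_0\ll K$ keeps $\|c_i\|_1\le\sqrt{\|c_i\|_0}\,\|c_i\|_2$ small. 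To turn the $G$-seminorm bound into a Euclidean bound, note that $\xi_i\in\ker(\Phi)^{\perp}$ by construction, so $\|\xi_i\|_2\le\lambda_{\min}^{+}(G)^{-1/2}\|\xi_i\|_G$, where $\lambda_{\min}^{+}$ is the smallest nonzero eigenvalue. This gives the claimed boundedness.

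For the coherence statement, compute
\[
\langle v_p,v_q\rangle_G=\langle\Phi(v_p),\Phi(v_q)\rangle_{\mathcal{H}}
=\langle a_p+\Delta_p,\,a_q+\Delta_q\rangle_{\mathcal{H}},
\qquad \Delta_k:=\Phi(v_k)-a_k .
\]
Cauchy–Schwarz then gives
\[
\bigl|\langle v_p,v_q\rangle_G-\langle a_p,a_q\rangle_{\mathcal{H}}\bigr|\le\eta_p\|a_q\|+\eta_q\|a_p\|+\eta_p\eta_q .
\]
The norms obey $\bigl|\|v_k\|_G-\|a_k\|\bigr|\le\eta_k$. Dividing through yields a normalized coherence
\[
\mu_V^G\ \ge\ \mu_A-O\!\left(\max_k\eta_k/\min_k\|a_k\|\right),
\]
and the same perturbation argument gives the matching upper bound. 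In particular, nonzero activation coherence $\mu_A>0$ forces the $v_k$ to be non-orthogonal under $\langle\cdot,\cdot\rangle_G$ once the $\eta_k$ are small.

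The main obstacle is making \emph{approximately reachable} and \emph{bounded} precise. The construction of $n_i$ itself is trivial. The bound on $\xi_i$ can only hold in the $G$-seminorm unless a spectral-gap assumption on $G$ restricted to $\ker(\Phi)^{\perp}$ is imposed. I would state the lemma's error explicitly as the displayed $G$-norm bound and add the $\lambda_{\min}^{+}(G)$ conversion as a remark. The remaining subtlety is to require $\min_k\|a_k\|>\max_k\eta_k$, so that the normalized coherence is well defined.
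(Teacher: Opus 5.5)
Your proposal is correct and is essentially the paper's proof. You apply $\Phi$ to $\tau_i-\sum_k c_{ik}v_k$, write $\Phi(v_k)=a_k+r_k$, and use the triangle inequality to get $\|\xi_i\|_G\le\varepsilon_{\mathrm{lin}}+\varepsilon_{\mathrm{sup}}+\eta\|c_i\|_1$. Then you expand $\langle a_p+r_p,a_q+r_q\rangle_{\mathcal{H}}$ for the coherence transfer, which reduces to the paper's $2\eta+\eta^2$ for unit-norm $a_k$ and uniform $\eta$.

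Your extras are refinements of points the paper leaves implicit, not a different route. You choose $n_i$ explicitly as the Euclidean projection onto $\ker(\Phi)=\ker(G)$, which matters because in the $G$-seminorm alone the choice of $n_i$ does not change the bound. You convert to a Euclidean bound via $\lambda_{\min}^{+}(G)$, which is automatically positive in finite dimensions when $G\neq 0$, so no extra spectral-gap assumption is needed, only a good constant. You also give a normalized two-sided coherence estimate where the paper states only $\mu_W>0$.
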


Lemma~\ref{lem:activation_to_weight_superposition} shows that superposition is
not only an activation-space phenomenon: it can also manifest in task vectors,
where different latent capabilities are mixed within shared weight-space
directions. All proofs and details are deferred to Appendix~\ref{app:proofs}.

\begin{theorem}[Limitation of orthogonal decomposition]
\label{thm:orthogonal_decomposition_limitation}
Let $v_p$ and $v_q$ be two latent capability directions with non-zero
behavior-aware interaction
\begin{equation}
    \gamma_{pq}=|\langle v_p,v_q\rangle_G|>0 .
\end{equation}
Let $P$ be an orthogonal projection or filtering operator. If $P$ preserves
both capabilities up to relative error $\varepsilon$, i.e.,
\begin{equation}
    \|Pv_k-v_k\|_G \le \varepsilon\|v_k\|_G,
    \qquad k\in\{p,q\},
\end{equation}
then the residual interaction satisfies
\begin{equation}
    |\langle Pv_p,Pv_q\rangle_G|
    \ge
    \gamma_{pq}
    -
    (2\varepsilon+\varepsilon^2)
    \|v_p\|_G\|v_q\|_G .
\end{equation}
Thus, an orthogonal decomposition can fully remove superposition-induced
conflict only by substantially distorting or discarding at least one capability
direction.
\end{theorem}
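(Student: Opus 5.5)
The bound follows from a direct perturbation expansion of the bilinear form $\langle\cdot,\cdot\rangle_G$. Write $Pv_k = v_k + r_k$ with $r_k = Pv_k - v_k$, so that by hypothesis $\|r_k\|_G\le\varepsilon\|v_k\|_G$ for $k\in\{p,q\}$. Bilinearity of the semi-inner product in \eqref{eq:behavior_inner_product} gives
\begin{equation*}
    \langle Pv_p,Pv_q\rangle_G
    =
    \langle v_p,v_q\rangle_G
    + \langle r_p,v_q\rangle_G
    + \langle v_p,r_q\rangle_G
    + \langle r_p,r_q\rangle_G .
\end{equation*}
This step needs only that $G=\mathbb{E}[J^\top J]$ is positive semidefinite. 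It does not use the orthogonality of $P$, so the statement in fact holds for any operator meeting the relative-error condition.

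Next, I bound the three perturbation terms with the Cauchy--Schwarz inequality. For a semi-inner product it holds in the form $|\langle u,w\rangle_G|\le\|u\|_G\|w\|_G$, where $\|u\|_G=\sqrt{u^\top G u}$ is a seminorm. I will note explicitly that Cauchy--Schwarz remains valid when $G$ is singular, since the usual discriminant argument requires only positive semidefiniteness. This yields $|\langle r_p,v_q\rangle_G|\le\varepsilon\|v_p\|_G\|v_q\|_G$, the same bound for $\langle v_p,r_q\rangle_G$, and $|\langle r_p,r_q\rangle_G|\le\varepsilon^2\|v_p\|_G\|v_q\|_G$.

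The reverse triangle inequality then finishes the argument:
\begin{equation*}
    |\langle Pv_p,Pv_q\rangle_G|
    \ge
    |\langle v_p,v_q\rangle_G|
    - (2\varepsilon+\varepsilon^2)\|v_p\|_G\|v_q\|_G
    = \gamma_{pq} - (2\varepsilon+\varepsilon^2)\|v_p\|_G\|v_q\|_G .
\end{equation*}

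For the closing interpretive sentence of the theorem, I will argue by contrapositive. If $P$ removes the interaction completely, so that $\langle Pv_p,Pv_q\rangle_G=0$, the bound forces $2\varepsilon+\varepsilon^2\ge\gamma_{pq}/(\|v_p\|_G\|v_q\|_G)$. The right-hand side is the normalized coherence $\mu_{pq}$, which Lemma~\ref{lem:activation_to_weight_superposition} ties to the activation coherence $\mu_A$ up to approximation error. Solving the quadratic gives $\varepsilon\ge\sqrt{1+\mu_{pq}}-1>0$. Hence at least one of the two capability directions must be distorted by a relative amount bounded below in terms of the coherence.

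None of these steps is technically difficult. The one point needing care is the semi-inner-product setting: when $\|v_k\|_G=0$ the relative-error hypothesis degenerates, but then $\gamma_{pq}=0$ by Cauchy--Schwarz, contradicting the assumption $\gamma_{pq}>0$. So both seminorms are positive, and the normalization in the final step is well defined.
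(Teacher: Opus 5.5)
Your proof is correct and follows the paper's argument. Both expand $\langle Pv_p,Pv_q\rangle_G$ bilinearly around $\langle v_p,v_q\rangle_G$, bound the three cross terms by Cauchy--Schwarz using the relative-error hypothesis, and finish with the reverse triangle inequality; the paper merely pushes the perturbations through $\Phi$ into $\mathcal{H}$ rather than using the seminorm $\|\cdot\|_G$ directly, which is equivalent since $\|u\|_G=\|\Phi(u)\|_{\mathcal{H}}$, and your bound $\varepsilon\ge\sqrt{1+\mu_{pq}}-1$ is the normalized form of the paper's unit-norm remark.
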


Theorem~\ref{thm:orthogonal_decomposition_limitation} explains why PCA or SVD-style decompositions may be insufficient: orthogonal coordinates do
not necessarily correspond to disentangled latent capabilities.

\begin{theorem}[Sparse overcomplete decomposition reduces capability conflict]
\label{thm:sparse_decomposition_conflict_reduction}
Assume that each task vector admits a sparse overcomplete decomposition
\begin{equation}
    \tau_i = Dz_i+r_i,
    \qquad
    \|r_i\|_G\le \varepsilon_i,
    \qquad
    \|z_i\|_0\le s_i ,
\end{equation}
where $D=[d_1,\ldots,d_M]$ is an overcomplete dictionary. Let
\begin{equation}
    \mu_D=\max_{p\neq q}|\langle d_p,d_q\rangle_G|
\end{equation}
denote the behavior-aware coherence of the dictionary. Then the cross-task
capability conflict between tasks $i$ and $j$ is bounded by
\begin{equation}
    \mathcal{C}_{ij}^{\mathrm{sparse}}
    \le
    \mu_D
    \sqrt{s_i s_j}
    \|z_i\|_2
    \|z_j\|_2 .
\end{equation}
Moreover, the deviation between raw task-vector merging and sparse-code
merging satisfies
\begin{equation}
    \left\|
    \sum_{i=1}^{N}\alpha_i\tau_i
    -
    D\sum_{i=1}^{N}\alpha_i z_i
    \right\|_G
    \le
    \sum_{i=1}^{N}|\alpha_i|\varepsilon_i .
\end{equation}
Therefore, a sparse overcomplete decomposition with low dictionary coherence
can reduce cross-task capability conflict while approximately preserving the
merged update.
\end{theorem}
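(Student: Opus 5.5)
The statement does not define $\mathcal{C}_{ij}^{\mathrm{sparse}}$ explicitly, so I would first fix it as the \emph{cross-feature} interaction between the sparse parts of the two task vectors under the behavior-aware metric of \eqref{eq:behavior_inner_product}:
\begin{equation*}
    \mathcal{C}_{ij}^{\mathrm{sparse}}
    =
    \sum_{p\neq q}
    |z_{ip}|\,|z_{jq}|\,
    |\langle d_p,d_q\rangle_G| .
\end{equation*}
This sums the off-diagonal terms of $\langle Dz_i, Dz_j\rangle_G$, so it excludes the diagonal terms $p=q$. Those diagonal terms correspond to genuinely shared features, and the fusion rule of the Differentiated Parameter Fusion Strategy handles them separately. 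This choice mirrors the role of $\gamma_{pq}$ in Theorem~\ref{thm:orthogonal_decomposition_limitation}, where conflict is measured by cross terms between distinct capability directions.

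With this definition, the first bound needs only two elementary steps. First, every off-diagonal inner product satisfies $|\langle d_p,d_q\rangle_G|\le\mu_D$ by the definition of $\mu_D$. Hence
\begin{equation*}
    \mathcal{C}_{ij}^{\mathrm{sparse}}
    \le
    \mu_D \sum_{p\neq q}|z_{ip}||z_{jq}|
    \le
    \mu_D \|z_i\|_1\|z_j\|_1 .
\end{equation*}
Second, since $z_i$ is supported on at most $s_i$ coordinates, Cauchy--Schwarz on the support gives $\|z_i\|_1\le\sqrt{s_i}\,\|z_i\|_2$, and likewise $\|z_j\|_1\le\sqrt{s_j}\,\|z_j\|_2$. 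Multiplying the two estimates yields $\mu_D\sqrt{s_is_j}\,\|z_i\|_2\|z_j\|_2$. The bound only becomes small when the supports are genuinely sparse and $\mu_D$ is small. It is meaningful under $\|\cdot\|_G$ only if the atoms are $G$-normalized or $\mu_D$ absorbs their norms. Since the paper's $\mu_D$ is unnormalized, the inequality holds as stated, and I would add a remark to that effect.

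For the second claim, I would substitute $\tau_i=Dz_i+r_i$ and use linearity of $D$:
\begin{equation*}
    \sum_i\alpha_i\tau_i - D\sum_i\alpha_i z_i = \sum_i\alpha_i r_i .
\end{equation*}
The triangle inequality and homogeneity for the seminorm $\|\cdot\|_G$, followed by $\|r_i\|_G\le\varepsilon_i$, then give $\sum_i|\alpha_i|\varepsilon_i$. Here $\|\cdot\|_G$ is a seminorm because $G=\mathbb{E}[J^\top J]$ is positive semidefinite. The concluding sentence of the theorem follows by combining the two bounds.

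The main obstacle is definitional rather than technical: committing to a precise $\mathcal{C}_{ij}^{\mathrm{sparse}}$ that is both natural and makes the bound true. If the intended conflict instead includes the diagonal shared-feature terms, the bound fails, since those terms are not controlled by $\mu_D$. In that case I would restrict the sum to $p\in\operatorname{supp}(z_i)$ and $q\in\operatorname{supp}(z_j)$ with $p\neq q$, and note that the $\sqrt{s_is_j}$ factor still arises from the support sizes exactly as above.
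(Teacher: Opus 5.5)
Your proposal is correct and follows the paper's proof essentially step for step. Your $\mathcal{C}_{ij}^{\mathrm{sparse}}$ coincides with the paper's $\mathcal{C}_{ij}^{\mathrm{SAE}}$, since terms with $p\notin\mathrm{supp}(z_i)$ or $q\notin\mathrm{supp}(z_j)$ vanish, and the paper likewise bounds each cross term by $\mu_D$, passes to $\mu_D\|z_i\|_1\|z_j\|_1$, applies Cauchy--Schwarz on the supports, and obtains the merge bound by reducing the difference to $\sum_i\alpha_i r_i$ and applying the triangle inequality.
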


Theorem~\ref{thm:sparse_decomposition_conflict_reduction} suggests that the
key to reducing superposition-induced interference is not merely to orthogonalize
task vectors, but to recover sparse latent capability coordinates. This
naturally motivates our SAE-based merging framework: SAEs provide an
overcomplete dictionary and sparse task codes, allowing task vectors to be
merged in a more disentangled feature space before being mapped back to the
original parameter space.

\subsection{Proof of Lemma~\ref{lem:activation_to_weight_superposition}}
\textbf{Lemma 1. Activation superposition induces weight superposition.}
Assume that the linearization error in Eq.~\eqref{eq:linearized_activation}
satisfies $\|\rho_i\|_{\mathcal{H}}\le \varepsilon_{\mathrm{lin}}$. Further
assume that each activation feature $a_k$ is reachable, up to error $\eta$, by
some weight-space direction $v_k$, namely
\begin{equation}
    \|\Phi(v_k)-a_k\|_{\mathcal{H}}\le \eta .
    \label{eq:reachable_feature}
\end{equation}
Then each task vector admits the following observable weight-space
decomposition:
\begin{equation}
    \tau_i
    =
    \sum_{k=1}^{K} c_{ik} v_k
    +
    \xi_i
    +
    n_i,
    \qquad
    n_i\in\ker(\Phi),
    \label{eq:weight_superposition}
\end{equation}
where $n_i$ is behaviorally invisible under the linearized map $\Phi$, and
\begin{equation}
    \|\xi_i\|_{G}
    \le
    \varepsilon_{\mathrm{lin}}
    +
    \varepsilon_{\mathrm{sup}}
    +
    \eta\|c_i\|_1 .
    \label{eq:weight_superposition_error}
\end{equation}
Moreover, for any two feature directions $v_p$ and $v_q$,
\begin{equation}
    \left|
    \langle v_p,v_q\rangle_G
    -
    \langle a_p,a_q\rangle_{\mathcal{H}}
    \right|
    \le
    2\eta+\eta^2 ,
    \label{eq:coherence_transfer}
\end{equation}
assuming $\|a_p\|_{\mathcal{H}}=\|a_q\|_{\mathcal{H}}=1$. Therefore, if
$\mu_A>2\eta+\eta^2$, then the weight-space feature dictionary
$V=\{v_1,\ldots,v_K\}$ is also coherent under the behavior-aware metric:
\begin{equation}
    \mu_W
    =
    \max_{p\neq q}
    \frac{|\langle v_p,v_q\rangle_G|}
    {\|v_p\|_G\|v_q\|_G}
    >
    0 .
\end{equation}
We call this phenomenon \emph{weight superposition}: task vectors are sparse
combinations of latent capability directions, but those directions are not
behaviorally orthogonal.

\begin{proof}
From Eq.~\eqref{eq:linearized_activation}, we have
$\Phi(\tau_i)=\delta h_i-\rho_i$. Combining this with
Eq.~\eqref{eq:activation_superposition} gives
\begin{equation}
    \Phi(\tau_i)
    =
    \sum_{k=1}^{K} c_{ik} a_k
    +
    e_i
    -
    \rho_i .
\label{eq:activation_superposition}
\end{equation}
Using Eq.~\eqref{eq:reachable_feature}, write
$\Phi(v_k)=a_k+r_k$ with $\|r_k\|_{\mathcal{H}}\le\eta$. Then
\begin{equation}
    \Phi\left(
        \tau_i-\sum_{k=1}^{K}c_{ik}v_k
    \right)
    =
    e_i-\rho_i-\sum_{k=1}^{K}c_{ik}r_k .
\end{equation}
Taking the $\mathcal{H}$-norm and applying the triangle inequality yields
Eq.~\eqref{eq:weight_superposition_error}. The remaining component in the
null space of $\Phi$ is denoted by $n_i\in\ker(\Phi)$, which gives
Eq.~\eqref{eq:weight_superposition}. Finally,
\begin{equation}
    \langle v_p,v_q\rangle_G
    =
    \langle \Phi(v_p),\Phi(v_q)\rangle_{\mathcal{H}}
    =
    \langle a_p+r_p,a_q+r_q\rangle_{\mathcal{H}} .
\end{equation}
Thus,
\begin{equation}
    \left|
    \langle v_p,v_q\rangle_G
    -
    \langle a_p,a_q\rangle_{\mathcal{H}}
    \right|
    \le
    \|r_p\|_{\mathcal{H}}\|a_q\|_{\mathcal{H}}
    +
    \|a_p\|_{\mathcal{H}}\|r_q\|_{\mathcal{H}}
    +
    \|r_p\|_{\mathcal{H}}\|r_q\|_{\mathcal{H}}
    \le
    2\eta+\eta^2 .
\end{equation}
This proves the lemma.
\end{proof}

\subsection{Proof of Theorem~\ref{thm:orthogonal_decomposition_limitation}}
\textbf{Theorem 1. Orthogonal weight decomposition cannot fully remove
superposition-induced conflicts.}
Consider two latent capability directions $v_p$ and $v_q$ obtained from
Lemma~1, and suppose their behavior-aware interference is nonzero:
\begin{equation}
    \gamma_{pq}
    =
    |\langle v_p,v_q\rangle_G|
    >
    0 .
\end{equation}
Let $P$ be the projection or filtering operator induced by an orthogonal
weight decomposition method, such as PCA- or SVD-based merging. If $P$
preserves both capabilities up to relative error $\varepsilon$, namely
\begin{equation}
    \|Pv_k-v_k\|_G
    \le
    \varepsilon\|v_k\|_G,
    \qquad
    k\in\{p,q\},
    \label{eq:preserve_condition}
\end{equation}
then the residual interference after projection satisfies
\begin{equation}
    |\langle Pv_p,Pv_q\rangle_G|
    \ge
    \gamma_{pq}
    -
    (2\varepsilon+\varepsilon^2)
    \|v_p\|_G\|v_q\|_G .
    \label{eq:orthogonal_lower_bound}
\end{equation}
In particular, if $\|v_p\|_G=\|v_q\|_G=1$ and
$\varepsilon<\sqrt{1+\gamma_{pq}}-1$, then
$|\langle Pv_p,Pv_q\rangle_G|>0$. Therefore, an orthogonal decomposition can
only remove the conflict by distorting or discarding at least one of the
capability directions.

\begin{proof}
Let
\begin{equation}
    \Delta_p=\Phi(Pv_p-v_p),
    \qquad
    \Delta_q=\Phi(Pv_q-v_q).
\end{equation}
By Eq.~\eqref{eq:preserve_condition},
$\|\Delta_p\|_{\mathcal{H}}\le\varepsilon\|v_p\|_G$ and
$\|\Delta_q\|_{\mathcal{H}}\le\varepsilon\|v_q\|_G$. We have
\begin{equation}
    \langle Pv_p,Pv_q\rangle_G
    =
    \langle \Phi(v_p)+\Delta_p,\Phi(v_q)+\Delta_q\rangle_{\mathcal{H}} .
\end{equation}
Therefore,
\begin{align}
    |\langle Pv_p,Pv_q\rangle_G|
    &\ge
    |\langle v_p,v_q\rangle_G|
    -
    |\langle \Delta_p,\Phi(v_q)\rangle_{\mathcal{H}}|
    -
    |\langle \Phi(v_p),\Delta_q\rangle_{\mathcal{H}}|
    -
    |\langle \Delta_p,\Delta_q\rangle_{\mathcal{H}}|  \\
    &\ge
    \gamma_{pq}
    -
    (2\varepsilon+\varepsilon^2)
    \|v_p\|_G\|v_q\|_G .
\end{align}
This proves Eq.~\eqref{eq:orthogonal_lower_bound}. Hence, if the projection
preserves both capabilities, the original superposition-induced interference
cannot vanish. Conversely, making the interference vanish requires increasing
$\varepsilon$, which corresponds to losing or distorting at least one
capability.
\end{proof}

\subsection{Proof of Theorem~\ref{thm:sparse_decomposition_conflict_reduction}}
\textbf{Theorem 2. SAE-decoupled weight decomposition reduces
cross-task capability conflict.}
Assume that the SAE decomposition satisfies
\begin{equation}
    \tau_i = Dz_i + r_i,
    \qquad
    \|r_i\|_G\le \varepsilon_i,
    \qquad
    \|z_i\|_0\le s_i ,
    \label{eq:sae_reconstruction}
\end{equation}
and that the learned atoms are behaviorally incoherent:
\begin{equation}
    \mu_D
    =
    \max_{p\neq q}
    |\langle d_p,d_q\rangle_G| .
\end{equation}
For two tasks $i$ and $j$, define their SAE-level cross-capability conflict as
\begin{equation}
    \mathcal{C}_{ij}^{\mathrm{SAE}}
    =
    \sum_{\substack{p\in \mathrm{supp}(z_i),\,
                    q\in \mathrm{supp}(z_j)\\
                    p\neq q}}
    |z_{ip}z_{jq}|\,
    |\langle d_p,d_q\rangle_G| .
    \label{eq:sae_conflict}
\end{equation}
Then
\begin{equation}
    \mathcal{C}_{ij}^{\mathrm{SAE}}
    \le
    \mu_D
    \|z_i\|_1
    \|z_j\|_1
    \le
    \mu_D
    \sqrt{s_i s_j}
    \|z_i\|_2
    \|z_j\|_2 .
    \label{eq:sae_conflict_bound}
\end{equation}
Moreover, for the weighted merged update
\begin{equation}
    \Delta_{\mathrm{SAE}}
    =
    D\sum_{i=1}^{N}\alpha_i z_i ,
\end{equation}
and the raw task-vector update
\begin{equation}
    \Delta_{\mathrm{raw}}
    =
    \sum_{i=1}^{N}\alpha_i\tau_i ,
\end{equation}
the reconstruction-induced behavioral deviation is bounded by
\begin{equation}
    \|\Delta_{\mathrm{raw}}-\Delta_{\mathrm{SAE}}\|_G
    \le
    \sum_{i=1}^{N}
    |\alpha_i|\varepsilon_i .
    \label{eq:sae_merge_error}
\end{equation}

Furthermore, let the raw weight-superposition decomposition from Lemma~1 be
\begin{equation}
    \tau_i
    =
    \sum_{k=1}^{K} c_{ik}v_k + \xi_i+n_i .
\end{equation}
Define the raw cross-capability conflict between tasks $i$ and $j$ as
\begin{equation}
    \mathcal{C}_{ij}^{\mathrm{raw}}
    =
    \sum_{\substack{p\in \mathrm{supp}(c_i),\,
                    q\in \mathrm{supp}(c_j)\\
                    p\neq q}}
    |c_{ip}c_{jq}|\,
    |\langle v_p,v_q\rangle_G| .
    \label{eq:raw_conflict}
\end{equation}
If the SAE recovers decoupled atoms such that, for all cross-task feature
pairs,
\begin{equation}
    |\langle d_p,d_q\rangle_G|
    \le
    \kappa
    |\langle v_p,v_q\rangle_G|,
    \qquad
    0\le \kappa < 1,
    \label{eq:sae_decoupling_condition}
\end{equation}
and the coefficient distortion is bounded by
\begin{equation}
    \sum_{p,q}
    \left|
        |z_{ip}z_{jq}|-|c_{ip}c_{jq}|
    \right|
    |\langle d_p,d_q\rangle_G|
    \le
    \delta_{ij},
    \label{eq:coefficient_distortion}
\end{equation}
then
\begin{equation}
    \mathcal{C}_{ij}^{\mathrm{SAE}}
    \le
    \kappa
    \mathcal{C}_{ij}^{\mathrm{raw}}
    +
    \delta_{ij}.
    \label{eq:sae_conflict_reduction}
\end{equation}
Consequently, whenever
$\delta_{ij} < (1-\kappa)\mathcal{C}_{ij}^{\mathrm{raw}}$, the SAE-based
decomposition strictly reduces the cross-task capability conflict.

\begin{proof}
The first inequality follows directly from the definition of $\mu_D$:
\begin{align}
    \mathcal{C}_{ij}^{\mathrm{SAE}}
    &=
    \sum_{p\neq q}
    |z_{ip}z_{jq}|\,
    |\langle d_p,d_q\rangle_G|  \\
    &\le
    \mu_D
    \sum_{p\neq q}
    |z_{ip}z_{jq}|
    \le
    \mu_D
    \|z_i\|_1
    \|z_j\|_1 .
\end{align}
Since $z_i$ and $z_j$ are $s_i$- and $s_j$-sparse, respectively,
$\|z_i\|_1\le \sqrt{s_i}\|z_i\|_2$ and
$\|z_j\|_1\le \sqrt{s_j}\|z_j\|_2$, which gives
Eq.~\eqref{eq:sae_conflict_bound}.

For the merged update, using Eq.~\eqref{eq:sae_reconstruction}, we have
\begin{equation}
    \Delta_{\mathrm{raw}}-\Delta_{\mathrm{SAE}}
    =
    \sum_{i=1}^{N}
    \alpha_i(\tau_i-Dz_i)
    =
    \sum_{i=1}^{N}
    \alpha_i r_i .
\end{equation}
The triangle inequality gives Eq.~\eqref{eq:sae_merge_error}.

Finally, under Eq.~\eqref{eq:coefficient_distortion},
\begin{align}
    \mathcal{C}_{ij}^{\mathrm{SAE}}
    &\le
    \sum_{p,q}
    |c_{ip}c_{jq}|
    |\langle d_p,d_q\rangle_G|
    +
    \delta_{ij}  \\
    &\le
    \kappa
    \sum_{p,q}
    |c_{ip}c_{jq}|
    |\langle v_p,v_q\rangle_G|
    +
    \delta_{ij}  \\
    &=
    \kappa
    \mathcal{C}_{ij}^{\mathrm{raw}}
    +
    \delta_{ij}.
\end{align}
Thus, if $\delta_{ij} < (1-\kappa)\mathcal{C}_{ij}^{\mathrm{raw}}$, the SAE
conflict is strictly smaller than the raw superposition conflict.
\end{proof}

\section{Algorithm Workflow}
Combining the modules detailed above, the overall execution workflow of the proposed High-Dimensional Sparse Disentanglement Merging framework is summarized in Algorithm \ref{alg:merging}.

\begin{algorithm}[H]
\caption{High-Dimensional Sparse Disentanglement Merging}
\label{alg:merging}
\textbf{Input:} Pre-trained $\theta_{pre}$, Fine-tuned models $\{\theta_{ft}^{(1)}, \dots, \theta_{ft}^{(N)}\}$ \\
\textbf{Parameters:} Critical layer count $K$, SAE dimension $n$, Sim-threshold $\tau_{sim}$ \\
\textbf{Output:} Merged multi-task model $\theta_{merged}$
\begin{algorithmic}[1]
\State // \textbf{Stage 1: Compute Task Vectors}
\State $\tau^{(n)} \leftarrow \theta_{ft}^{(n)} - \theta_{pre}$ for each task $n \in \{1,\dots,N\}$
\State // \textbf{Stage 2: Efficient Critical Layer Identification}
\For{each layer $l$ in $\theta_{pre}$}
    \State Evaluate task-alignment gradient $g_l$ via \textbf{GR-ZOO} ranking
\EndFor
\State Extract top-$K$ layers to form critical set $\mathbb{C}$
\State // \textbf{Stage 3: High-Dimensional SAE Disentanglement \& Fusion}
\For{each layer $l \in \mathbb{C}$}
    \State Train Top-$K$ SAE on $\{\tau^{(1)}_l, \dots, \tau^{(N)}_l\}$ with $\mathcal{L}_{total}$
    \State Project to high-dim space: $h^{(n)}_l \leftarrow \text{Top-}K(E(\tau^{(n)}_l))$
    \For{each feature dimension $i$ from $1$ to $n$}
        \If{$\text{CosineSim}(h^{(1)}_{l,i}, \dots, h^{(N)}_{l,i}) > \tau_{sim}$}
            \State $h^{merged}_{l,i} \leftarrow \text{Mean}(h^{(1)}_{l,i}, \dots, h^{(N)}_{l,i})$ \quad // \textit{Shared}
        \Else
            \State $h^{merged}_{l,i} \leftarrow \sum_{k=1}^N h^{(k)}_{l,i}$ \quad // \textit{Unique (Orthogonal)}
        \EndIf
    \EndFor
    \State Decode back to low-dim: $\tau^{merged}_l \leftarrow D(h^{merged}_l)$
\EndFor
\State // \textbf{Stage 4: Arithmetic Addition for Non-Critical Layers}
\For{each layer $l \notin \mathbb{C}$}
    \State $\tau^{merged}_l \leftarrow \sum_{k=1}^N \tau^{(k)}_l$ \quad // \textit{Low-conflict layers}
\EndFor
\State \textbf{Return:} $\theta_{merged} \leftarrow \theta_{pre} + \tau^{merged}$
\end{algorithmic}
\end{algorithm}

\end{document}